\newcommand{\hypothesis}{h}
\newcommand{\predfun}{f}
\newcommand{\hypspace}{\mathcal{H}}
\newcommand{\funset}{\mathcal{F}}
\newcommand{\kernelf}{K} % adjusted
\newcommand{\lossfunction}{L}
\newcommand{\regparam}{\lambda}
\newcommand{\queryspace}{\mathcal{Q}}
\newcommand{\pointspace}{\mathcal{P}}
\newcommand{\anyspace}{\mathcal{X}}
\newcommand{\node}{v}
\newcommand{\query}{q}
\newcommand{\bm}[1]{\mathbf{#1}}
\newcommand{\idmatrix}{\bm{I}}
\newcommand{\anymatrix}{\bm{M}}
\newcommand{\othermatrix}{\bm{N}}
\newcommand{\tsize}{m}
\newcommand{\tset}{T}
\newcommand{\evecmatrix}{\bm{V}}
\newcommand{\evalmatrix}{\bm{\Lambda}}
\newcommand{\shufflem}{\bm{P}}
\newcommand{\symm}{\bm{S}}
\newcommand{\asymm}{\bm{A}}
\newcommand{\asymfun}{t}
\newcommand{\defect}{I}
\newcommand{\effdim}{D}
\newcommand{\koper}{\bm{T}}
\newcommand{\krausoper}{\bm{E}}
\newcommand{\lmutorkhs}{\bm{U}}
\newcommand{\arbfun}{r}
\newcommand{\approxfun}{u}
\newtheorem{theorem}{Theorem}[section]
\newtheorem{lemma}[theorem]{Lemma}
\newtheorem{proposition}[theorem]{Proposition}
\newtheorem{corollary}[theorem]{Corollary}
\newtheorem{definition}[theorem]{Definition}
\newtheorem{remark}[theorem]{Remark}
\begin{document}

\title{Spectral Analysis of Symmetric and Anti-Symmetric Pairwise Kernels}
\author[1]{Tapio Pahikkala}
\author[1]{Markus Viljanen}
\author[1]{Antti Airola}
\author[2]{Willem Waegeman}
\affil[1]{Department of Information Technology, University of Turku, Joukahaisenkatu 3-5 B, FIN-20520, Turku, Finland, firstname.surname@utu.fi}
\affil[2]{Department of Mathematical Modelling, Statistics and Bioinformatics, Ghent University, Coupure links 653, B-9000 Ghent, Belgium, firstname.surname@UGent.be}
\maketitle

\begin{abstract}

We consider the problem of learning regression functions from pairwise data when there exists
prior knowledge that the relation to be learned is symmetric or anti-symmetric. Such prior knowledge
is commonly enforced by symmetrizing or anti-symmetrizing pairwise kernel functions. Through spectral
analysis, we show that these transformations reduce the kernel's effective dimension. Further,
we provide an analysis of the approximation properties of the resulting kernels, and bound the
regularization bias of the kernels in terms of the corresponding bias of the original kernel.

\end{abstract}

\section{Introduction}

%Learning symmetric and anti-symmetric functions are special cases of pairwise learning tasks, which can be further divided into more specific problems. For example, one of the most widely studied learning problems involving anti-symmetric functions is that of learning ordering relations, that is, given two objects, the aim is to correctly predict which is better. Typical tasks involving symmetric functions are, for example, the task of determining whether two objects belong to the same class, or what is the distance between them, etc.

Many real-world phenomena can be described in tems of pairwise relationships between entities.  When learning pairwise relations, symmetry and anti-symmetry are two types of prior knowledge constraints that commonly appear when both of the objects in a pair belong to the same domain. A typical example of an application where relationships are often assumed to be symmetric is the prediction of protein-protein interactions: if protein A interacts with protein B, then conversely it also holds that B interacts with A. Typical example of an anti-symmetric relation would be a preference relation: if A is preferred over B, then conversely B is not preferred over A. Commonly used symmetric pairwise kernels include the symmetrized Kronecker \citep{Benhur2005} and Cartesian \citep{kashima2009pairwise},  
as well as the metric learning \citep{vert2007new} kernels. Such kernels are analyzed in more detail by \citet{brunner2012pairwise}.
Typical examples of anti-symmetric kernels are the transitive kernel of \citep{Herbrich2000} used for learning to rank, and the anti-symmetric Kronecker product kernel \citep{pahikkala2010reciprocalkm} for learning intransitive preference relations.

Kernel-based learning algorithms are some of the most successful learning methods in practise and they also enjoy strong theoretical properties. It is well known in the machine learning literature that the eigenvalues and eigenfunctions of the integral operator of the kernel play a central role in obtaining error estimates in learning theory. One of the most intensively studied quantities depending on the eigenvalues is the so-called \emph{effective dimension} of the kernel, which has since its introduction by \citet{Zhang2002effective} been used by several other authors \citep{Mendelson2003kernelclasses,Caponnetto2007optimalrates}. For a recent summary of these results, see \citet{HsuK014randomdesign} and references therein. Therefore, the determination of the operator's eigensystem is important in its own right. Another important tool for analysis is the theory of universal kernels pioneered by \citet{Steinwart2002consistency}, which indicates that if a kernel has the so-called unversality property, the corresponding hypothesis space can approximate any continuous function arbitrarily well.

%What this means in practise is that a learning algorithm is able to learn the concept arbitrarily well provided that it is given a large and representative enough training set. %On the other hand, there exists a wide literature about the generalization properties of kernel methods depending on the properties of the kernel functions.

Intuitively it seems plausible that enforcing prior knowledge about symmetry or anti-symmetry should result in better generalization, and many promising experimental results have been obtained in the literature (see previous references). However, thus far rigorous theoretical analysisis of the effects that enforcing these properties on the kernel function has on learning has been missing in the literature. As a step towards this direction \citet{waegeman2012learninggraded} have shown that when symmetrizing or anti-symmetrizing pairwise kernels that are formed by taking the Kronecker product of two universal kernels, the resulting kernel allows approximating arbitrarily well any symmetric or anti-symmetric continuous function. While these results show that symmetrization or anti-symmetrization does not sacrifice expressive power needed for learning, the results concern only Kronecker product kernels, and do not provide any guarantees that learning would be more efficient with the transformed kernels.

%In this work, we establish the following main results:
Following are the main contributions and results of our paper:
\begin{itemize}
\item The effective dimension of both the symmetrized and anti-symmetrized versions of a pairwise kernel are smaller than that of the original pairwise kernel (see Theorem~\ref{effdimtheorem}).
\item The approximation properties of the symmetric and anti-symmetric kernels are analysed (see Theorem~\ref{generalantisymmetrictheorem}).
\item We bound the regularization bias of the symmetric and anti-symmetric kernels in terms of the regularization bias of the original kernel (see Theorem~\ref{regerrpropo}).
\end{itemize}

%In addition to the main results concerning the most general types of symmetric and anti-symmetric kernels, we present corollaries concerning some of the most important special cases, such as pairwise ordering relations and conditional ranking.
%, learning to match, learning with Kronecker kernels, et cetera.

%\section{Analysis of Symmetric and Anti-Symmetric RKHS}
%\section{Analysis of Pairwise Kernels}
\section{Preliminaries}

%\subsection{General Kernel Properties}

%\begin{definition}
% The Kronecker product of two matrices $\anymatrix$ and $\othermatrix$ is defined as
% \begin{eqnarray*}
% \anymatrix\otimes\othermatrix=\left(
% \begin{array}{ccc}
% {\anymatrix}_{1,1}\othermatrix&\cdots&\anymatrix_{1,n}\othermatrix\\
% \vdots&\ddots&\vdots\\
% {\anymatrix}_{m,1}\othermatrix&\cdots&\anymatrix_{m,n}\othermatrix
% \end{array}
% \right),
% \end{eqnarray*}
% \end{definition}

\begin{definition}[Kernel function]
For any set $\anyspace$, the function $\kernelf$ is a kernel if it can be written as the following type of an inner product:
\[
\kernelf(x,\overline{x})=\langle\Phi(x),\Phi(\overline{x})\rangle\;,
\]
where
\[
\Phi:\anyspace\rightarrow\mathcal{H}_\Phi
\]
is a mapping from $\anyspace$ to a Hilbert space $\mathcal{H}_\Phi$, popularly called the feature space in the literature. Conversely, any kernel can be written as the above type of an inner product. However, neither the feature mapping nor the feature space are unique.
\end{definition}
To simplify the forthcoming considerations, we make a couple of extra assumptions of the input space and kernels. Namely, we assume that the input space $\anyspace$ is compact (e.g. closed and bounded) and the kernel functions considered in this article are continuous. Let $\mu$ be a probability distribution over $\anyspace$ generating the data. We also assume that $\mu$ is a probability density with respect to a Lebesque measure (e.g. we can write $\int_\anyspace \hypothesis(x)d\mu(x)=\int_\anyspace \hypothesis(x)\mu(x)dx$ for any function $\hypothesis$).

We make use of the Hilbert space $L^2(\anyspace, \mu)$ of square integrable functions on $(\anyspace,\mu)$ with the inner product $\langle\hypothesis,g\rangle_{L^2(\anyspace, \mu)}=\int_x\hypothesis(x)g(x)d\mu(x)$. The elements of the space $L^2(\anyspace, \mu)$ are equivalence classes of functions rather that individual functions but this technical detail has no effect on the considerations below.

\begin{definition}[\citep{aronszajn1950}]
For each real-valued kernel $\kernelf$ and an input space $\anyspace$, there exists a unique Hilbert space $\hypspace(\kernelf)$ known as the reproducing kernel Hilbert space (RKHS):
%\begin{equation*}
%\hypspace(\kernelf)=\overline{\left\{ \hypothesis \in \mathbb{R}^\anyspace  \left\arrowvert
%\hypothesis(\cdot)=\sum_{i=1}^\tsize\beta_i\kernelf_{x_i},\phantom{w}x_i\in\anyspace\right.\right\}}\,,
%\end{equation*}
%where $\kernelf_{x}\in\hypspace(\kernelf)$ are functions such that $\kernelf_{x}(\overline{x})=\kernelf(x,\overline{x})$
\begin{enumerate}
  \item $\kernelf_{x}\in\hypspace(\kernelf)\phantom{W}\forall x\in\anyspace$, where
\[
\kernelf_{x}:\anyspace\rightarrow\mathbb{R}
\]
 are functions such that $\kernelf_{x}(\overline{x})=\kernelf(x,\overline{x})$
  \item $\textnormal{span}(\{\kernelf_{x}\}_{x\in\anyspace})$ is dense in $\hypspace(\kernelf)$
%\begin{enumerate}
%  \item $\kernelf(\cdot,x) \in \hypspace(\kernelf) \phantom{W} \forall x \in \anyspace$ %, where $K x (t) = K(x, t)$
%  
\item The inner product $\langle\cdot,\cdot\rangle_{\hypspace(\kernelf)}$ associated with $\hypspace(\kernelf)$ satisfies:
\[
f(x) = \langle f, \kernelf_{x}\rangle \phantom{W} \forall f \in \hypspace(\kernelf),\phantom{w}x\in\anyspace
\]
which is known as the reproducing property. In particular,
\[
\kernelf(x,\overline{x})=\langle \kernelf_{x}, \kernelf_{\overline{x}}\rangle \phantom{W} \forall x,\overline{x}\in\anyspace\;.
\]
\end{enumerate}
%\item $\textnormal{span}{\kernelf(\cdot,x)}$ is dense in $\hypspace(\kernelf)$
%\end{enumerate}
\end{definition}
In the literature, the mapping:
\[
\Phi_\kernelf:x\rightarrow\kernelf_x\in\hypspace(\kernelf)
\]
is often referred to as the canonical feature map of the kernel.

 % In fact, we an always assume that $\mathcal{I}=\hypspace(\kernelf)$ (e.g. the feature space is equal to the RKHS of $\kernelf$) \citep{Micchelli2006universal}, which we do to simplify the forthcoming considerations.

%Following the standard notations for kernel methods, we formulate our learning problem as the selection of a suitable function $\hypothesis\in\hypspace$, with $\hypspace$ a certain hypothesis space, in particular a reproducing kernel Hilbert space (RKHS).

\begin{definition}[Integral operator of a kernel]\label{intopdef}
The probability distribution $\mu$ over $\anyspace$ yields a linear operator
\[
\lmutorkhs_\kernelf : L^2(\anyspace, \mu) \rightarrow \mathcal{H}(\kernelf)
\]
defined as
\[
\lmutorkhs_\kernelf\hypothesis=\int_\anyspace \kernelf_x\hypothesis(x)d\mu(x)\;.
\]
The adjoint of this operator is the inclusion $\lmutorkhs_\kernelf^*:\mathcal{H}(\kernelf)\xhookrightarrow{} L^2(\anyspace, \mu)$, that is,
\begin{align}
\langle\lmutorkhs_\kernelf\hypothesis,g\rangle_{\mathcal{H}(\kernelf)}=\langle\hypothesis,\lmutorkhs_\kernelf^*  g\rangle_{L^2(\anyspace, \mu)}\;.
\end{align}
Note the RKHS norm on the left hand side, determined by the reproducing property, being changed to the $L^2(\anyspace, \mu)$ norm on the right. The composition of $\lmutorkhs_\kernelf$ with its adjoint is the operator:
%for all $\hypothesis\in L^2(\anyspace, \mu)$.
\[
\koper_\kernelf : L^2(\anyspace, \mu) \rightarrow L^2(\anyspace, \mu)\;.
\]
%defined as
%\[
%\koper_\kernelf\hypothesis=\int_\anyspace \kernelf(\cdot, x)\hypothesis(x)d\mu(x)
%\]
for all $\hypothesis\in L^2(\anyspace, \mu)$. This decomposition is illustrated in the following commutative diagram:
\begin{center}
\begin{tikzpicture}
    \node (L) at (0,0) {$L^2(\anyspace, \mu)$};
    \node[below right=of L] (H) {$\mathcal{H}(\kernelf)$};
    \node[above right=of H] (L2) {$L^2(\anyspace, \mu)$};
    \draw[->] (L)--(H) node [midway,below] {$\lmutorkhs_\kernelf$};
    \draw[->] (L)--(L2) node [midway, above] {$\koper_\kernelf$};
    \draw[->] (H)--(L2) node [midway,below] {$\lmutorkhs_\kernelf^*$};
\end{tikzpicture}
\end{center}
\end{definition}
The operator $\koper_\kernelf$ can be shown to be continuous, self-adjoint and Hilbert-Schmidt, the last property indicating that its eigenvalues are square-summable, which is characterized below in more detail. We next recollect some classical results from functional analysis required in the forthcoming considerations.
\begin{theorem}[Spectral theorem for compact operators] Suppose $\mathcal{L}$ is a Hilbert space and $\koper:\mathcal{L}\rightarrow\mathcal{L}$ is compact and self-adjoint linear operator. Then, $\mathcal{L}$ has an orthonormal basis $\{\phi_i\}_i$ consisting of eigenvectors of $\koper$.
%Furthermore, for each $\lambda\neq 0, \dim \mathcal{T}_\lambda < \infty$, and for each $\epsilon>0,\left\{\lambda\mid\arrowvert\lambda\arrowvert\geq\epsilon\textnormal{ and }\dim \mathcal{T}_\lambda >0\right\}$ is finite.
\end{theorem}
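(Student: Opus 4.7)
The plan is to construct the orthonormal basis of eigenvectors by an iterative (or transfinite) argument that repeatedly extracts a top eigenvector of the restriction of $\koper$ to a shrinking invariant subspace, together with a separate treatment of the kernel of $\koper$.

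First I would establish the key lemma that every nonzero compact self-adjoint operator $\koper$ attains its norm: there exist $\phi\in\mathcal{L}$ with $\|\phi\|=1$ and $\lambda\in\mathbb{R}$ with $|\lambda|=\|\koper\|$ such that $\koper\phi=\lambda\phi$. For this I would use the identity $\|\koper\|=\sup_{\|x\|=1}|\langle \koper x,x\rangle|$, which holds for self-adjoint $\koper$. Taking a maximizing sequence $\{x_n\}$ with $\langle \koper x_n,x_n\rangle\to\lambda$ where $|\lambda|=\|\koper\|$, compactness of $\koper$ yields a subsequence along which $\koper x_n$ converges. The standard expansion $\|\koper x_n-\lambda x_n\|^2 = \|\koper x_n\|^2 - 2\lambda\langle \koper x_n,x_n\rangle + \lambda^2\|x_n\|^2 \le 2\lambda^2 - 2\lambda\langle \koper x_n,x_n\rangle \to 0$ then lets me extract a limiting unit vector $\phi$ satisfying $\koper\phi=\lambda\phi$.

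Next I would iterate. Given $\phi_1$ from the previous step, observe that $\{\phi_1\}^\perp$ is invariant under $\koper$ (a routine consequence of self-adjointness), and the restriction $\koper|_{\{\phi_1\}^\perp}$ is again compact and self-adjoint. Applying the lemma to this restriction produces $\phi_2\perp\phi_1$, and so on. This yields a (possibly finite or countable) orthonormal sequence $\{\phi_i\}$ of eigenvectors with eigenvalues $|\lambda_i|$ nonincreasing. Compactness forces $\lambda_i\to 0$ (else a bounded sequence in the image would have no convergent subsequence). Let $\mathcal{M}=\overline{\operatorname{span}}\{\phi_i\}$. On the orthogonal complement $\mathcal{M}^\perp$, the restriction of $\koper$ has operator norm $0$ by construction, so $\mathcal{M}^\perp\subseteq\ker(\koper)$; conversely $\ker(\koper)\subseteq\mathcal{M}^\perp$ is immediate since eigenvectors with nonzero eigenvalues are orthogonal to $\ker(\koper)$. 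Pick any orthonormal basis of $\ker(\koper)=\mathcal{M}^\perp$, declare its elements eigenvectors with eigenvalue $0$, and adjoin them to $\{\phi_i\}$ to obtain the desired orthonormal basis of $\mathcal{L}$.

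The main obstacle is the norm-attainment lemma: everything else is bookkeeping via self-adjointness and invariance of orthogonal complements. The subtlety is that without compactness, the supremum of $|\langle \koper x,x\rangle|$ need not be attained on the unit ball (the shift operator on $\ell^2$ illustrates this), so compactness is exactly what allows us to pass to a limit along the maximizing sequence. For a nonseparable $\mathcal{L}$, the iteration argument should be phrased via Zorn's lemma on orthonormal families of eigenvectors whose orthogonal complement still has nontrivial action of $\koper$; maximality together with the norm-attainment lemma then forces the complement to lie in $\ker(\koper)$, at which point the argument concludes as above.
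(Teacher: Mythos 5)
Your proof is correct: the norm-attainment lemma via $\|\koper\|=\sup_{\|x\|=1}|\langle\koper x,x\rangle|$ and the estimate $\|\koper x_n-\lambda x_n\|^2\le 2\lambda^2-2\lambda\langle\koper x_n,x_n\rangle\to 0$, followed by deflation on invariant orthogonal complements, the observation that $\lambda_i\to 0$ by compactness, and the identification $\mathcal{M}^\perp=\ker(\koper)$, is the standard argument, and your remark on handling nonseparable $\mathcal{L}$ (equivalently, noting that the closure of the range of a compact operator is separable) closes the only real gap. The paper itself states this spectral theorem as a recollected classical result and gives no proof, so there is nothing to compare your route against; it simply supplies what the paper takes for granted.
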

To compress the forthcoming notation and to take advantage the machinery of operator algebra, we use the following expression for the eigen decomposition of the integral operators: 
\[
\koper=\bm{V}\bm{\Lambda}\bm{V}^*\;,
\]
where $\bm{V}:e_i\mapsto\phi_i$ and $\bm{\Lambda}:e_i\mapsto\lambda_ie_i$, with $e_i$ being the standard basis vectors of $l^2$.
% consists of the eigenfunctions and $\bm{\Lambda}$ of the corresponding eigenvalues of $\koper$. One can think $\bm{\Lambda}$ as an element of the $l^2$ space and the unitary operator $\bm{V}$ as a mapping from $l^2$ to $\mathcal{L}$, but we mainly omit the technical details, since the operator algebra is mostly analogous to that of the usual matrix algebra.

%Consider an Hilbert space $H$ (e.g. the finite-dimensional Cn), and
%\[
% \mathcal{F}\subseteq\operatorname{Hom}(H,H)
%\]
%a commutative set of operators. If all the operators in $\mathcal{F}$ are compact then the operators can be simultaneously (unitarily) diagonalised.
For the integral operators of continuous kernels on compact domains, we have the following result known as Mercer's theorem:
\begin{theorem}[Mercer 1909]
Suppose $\kernelf$ is a continuous symmetric non-negative definite kernel. Then there is an orthonormal basis $\{\phi_i\}_i$ of $L^2(\anyspace)$ consisting of eigenfunctions of $T_\kernelf$ such that the corresponding sequence of eigenvalues $\{\lambda_i\}_i$ is nonnegative. The eigenfunctions corresponding to non-zero eigenvalues are continuous on $\anyspace$ and $\kernelf$ has the representation
\[
    \kernelf(x,\overline{x}) = \sum_{j\in\mathbb{N}} \lambda_j \, \phi_j(x) \, \phi_j(\overline{x}) 
\]
where the convergence is absolute and uniform.
\end{theorem}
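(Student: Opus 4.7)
The plan is to deduce Mercer's theorem by combining the spectral theorem for compact self-adjoint operators with a truncation argument and Dini's theorem. First I would verify the hypotheses needed to apply the spectral theorem to $T_\kernelf$: self-adjointness follows immediately from the symmetry of $\kernelf$, and compactness from the Hilbert--Schmidt property, since continuity of $\kernelf$ on the compact set $\anyspace$ gives $\int\int \kernelf(x,\overline{x})^2\,d\mu(x)\,d\mu(\overline{x}) < \infty$. The spectral theorem then produces the orthonormal eigenbasis $\{\phi_i\}$ of $L^2(\anyspace,\mu)$ with real eigenvalues $\{\lambda_i\}$. Non-negativity of $\lambda_i$ follows from the positive semi-definiteness of $\kernelf$ via $\lambda_i = \langle T_\kernelf \phi_i, \phi_i \rangle_{L^2} \ge 0$, by approximating $\phi_i$ with simple functions and invoking continuity of $\kernelf$. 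Continuity of $\phi_i$ whenever $\lambda_i \neq 0$ then follows from the identity $\phi_i(x) = \lambda_i^{-1} \int \kernelf(x,\overline{x})\phi_i(\overline{x})\, d\mu(\overline{x})$ combined with continuity of $\kernelf$ and dominated convergence.

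The technical heart of the proof is establishing absolute and uniform convergence of $\sum_j \lambda_j \phi_j(x)\phi_j(\overline{x})$ to $\kernelf(x,\overline{x})$. I would define the truncated kernel
\[
\kernelf_n(x,\overline{x}) \;=\; \kernelf(x,\overline{x}) - \sum_{j \le n} \lambda_j \phi_j(x)\phi_j(\overline{x}),
\]
which is again continuous and symmetric, and whose integral operator is positive semi-definite because $\langle T_{\kernelf_n} f, f\rangle = \sum_{j>n} \lambda_j \langle f, \phi_j\rangle_{L^2}^2 \ge 0$. I would then argue that operator positivity together with continuity of $\kernelf_n$ forces the pointwise bound $\kernelf_n(x,x) \ge 0$, by testing against $L^2$ functions whose mass concentrates near $x$ (this step uses that $\mu$ has a density against Lebesgue measure, so sufficiently localized test functions are available). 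Consequently $\sum_{j \le n} \lambda_j \phi_j(x)^2 \le \kernelf(x,x)$ for every $n$ and $x$, so the non-negative series $\sum_j \lambda_j \phi_j(x)^2$ converges pointwise. A Cauchy--Schwarz application to partial sums then gives absolute pointwise convergence of $\sum_j \lambda_j \phi_j(x)\phi_j(\overline{x})$, and because both $\kernelf(x,\cdot)$ and the partial sums have continuous representatives and agree in $L^2$ by the spectral expansion, the limit coincides with $\kernelf(x,\overline{x})$ pointwise.

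To upgrade this to uniform convergence I would appeal to Dini's theorem: the partial sums $S_n(x) = \sum_{j \le n} \lambda_j \phi_j(x)^2$ form a monotonically non-decreasing sequence of continuous functions on the compact set $\anyspace$ converging pointwise to the continuous function $\kernelf(x,x)$, so the convergence is uniform. A final Cauchy--Schwarz estimate of the form
\[
\Bigl|\sum_{m<j\le n} \lambda_j \phi_j(x)\phi_j(\overline{x})\Bigr|^2 \le \bigl(S_n(x)-S_m(x)\bigr)\bigl(S_n(\overline{x})-S_m(\overline{x})\bigr)
\]
then promotes uniform convergence on the diagonal to absolute and uniform convergence of the bivariate series to $\kernelf(x,\overline{x})$ on $\anyspace\times\anyspace$. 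I expect the most delicate step to be the passage from operator positivity of $T_{\kernelf_n}$ to pointwise positivity $\kernelf_n(x,x) \ge 0$: this is precisely where continuity of $\kernelf$ and the assumption that $\mu$ has a Lebesgue density both enter, and where the test functions must be chosen with care so that the quadratic form computed against them converges to $\kernelf_n(x,x)$ in the limit.
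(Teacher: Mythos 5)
The paper does not prove this statement: it is recollected verbatim as a classical result (Mercer 1909) and used as a black box, so there is no in-paper argument to compare against. Your proposal is the standard textbook proof of Mercer's theorem --- spectral theorem for the compact self-adjoint operator $\koper_\kernelf$, nonnegativity of eigenvalues from positive semi-definiteness, continuity of eigenfunctions via $\phi_i=\lambda_i^{-1}\koper_\kernelf\phi_i$, the truncation $\kernelf_n=\kernelf-\sum_{j\le n}\lambda_j\phi_j\otimes\phi_j$ with the key lemma that operator positivity of a continuous kernel forces $\kernelf_n(x,x)\ge 0$, then Dini on the diagonal and Cauchy--Schwarz off the diagonal --- and it is essentially sound.

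One point deserves sharpening: having a density with respect to Lebesgue measure is not by itself enough for the two places where you pass from $\mu$-a.e.\ or quadratic-form statements to pointwise ones, namely the deduction $\kernelf_n(x,x)\ge 0$ and the identification of the continuous limit of $\sum_j\lambda_j\phi_j(x)\phi_j(\cdot)$ with $\kernelf(x,\cdot)$. If the density vanishes on an open set, localized test functions supported there carry no $\mu$-mass and the quadratic form cannot ``see'' $\kernelf_n(x,x)$; likewise two continuous functions agreeing $\mu$-a.e.\ need only agree on $\operatorname{supp}(\mu)$. What you actually need is that $\operatorname{supp}(\mu)=\anyspace$ (equivalently, the density is positive on a dense set); otherwise the conclusion of Mercer's theorem holds only on $\operatorname{supp}(\mu)\times\operatorname{supp}(\mu)$. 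This is the standard caveat in the classical statement and is worth making explicit, since the paper's assumption that $\mu$ is a density does not formally guarantee it.
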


The spectral theorem also yields the following corollary about commuting compact and self-adjoint operators sharing the same eigen system (see e.g. \citet{zimmer1990essential}):
\begin{corollary}\label{commutationcoro}
%3.2.5. Let $\{T_\alpha\}_{\alpha\in I}$ be a subset of $B(E)$ such that each $T_\alpha$ is compact, self-adjoint, and $T_\alpha T_\beta = T_\beta T_\alpha$ for all $\alpha,\beta\in I$. Then there is an orthonormal basis $\{e_j\}$ of $E$ such that $e_j$ an eigen vector for every $T_\alpha$.
Let $\mathcal{T}$ be a Hilbert space and let $\koper_1:\mathcal{L}\rightarrow\mathcal{L}$ and $\koper_2:\mathcal{L}\rightarrow\mathcal{L}$ be compact and self-adjoint operators, such that $\koper_1\koper_2=\koper_2\koper_1$. Then there is an orthonormal basis $\{\phi_j\}$ of $\mathcal{L}$ such that $\phi_j$ an eigenvector for both $\koper_1$ and $\koper_2$.
\end{corollary}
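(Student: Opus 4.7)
The plan is to apply the spectral theorem to $\koper_1$ first, then exploit the commutation $\koper_1\koper_2=\koper_2\koper_1$ to diagonalise $\koper_2$ inside each of its eigenspaces.

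First I would invoke the spectral theorem on $\koper_1$ to decompose $\mathcal{L}$ as the Hilbert-space orthogonal direct sum of the eigenspaces $E_\lambda = \ker(\koper_1 - \lambda \idmatrix)$, where $\lambda$ ranges over the (at most countable) spectrum of $\koper_1$. Compactness of $\koper_1$ ensures that each $E_\lambda$ with $\lambda\neq 0$ is finite-dimensional, while $E_0$ is closed but possibly infinite-dimensional.

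Next I would show that each $E_\lambda$ is $\koper_2$-invariant: for $v\in E_\lambda$,
\[
\koper_1(\koper_2 v) \;=\; \koper_2(\koper_1 v) \;=\; \lambda(\koper_2 v),
\]
so $\koper_2 v \in E_\lambda$. The restriction $\koper_2\big|_{E_\lambda}$ is compact (restriction of a compact operator to a closed invariant subspace is compact) and self-adjoint on $E_\lambda$ with the inherited inner product. Applying the spectral theorem a second time to $\koper_2\big|_{E_\lambda}$ yields an orthonormal basis of $E_\lambda$ consisting of eigenvectors of $\koper_2$; since every vector of $E_\lambda$ is automatically an eigenvector of $\koper_1$ with eigenvalue $\lambda$, these basis vectors are simultaneous eigenvectors of both operators. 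Taking the union of such bases over all $\lambda$, and using that distinct eigenspaces of the self-adjoint operator $\koper_1$ are mutually orthogonal and together span $\mathcal{L}$, produces the required orthonormal basis $\{\phi_j\}$ of $\mathcal{L}$.

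The only subtle point is the handling of the possibly infinite-dimensional kernel $E_0$, but since compactness and self-adjointness of $\koper_2$ are preserved under restriction to any closed invariant subspace, the spectral theorem applies there verbatim and no extra technical machinery is needed. The rest of the argument is essentially bookkeeping: well-definedness of the decomposition, and noting that the simultaneous eigenvector property is automatic once one diagonalises the second operator \emph{within} each fixed $\koper_1$-eigenspace.
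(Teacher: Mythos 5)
Your proof is correct and is the standard argument: decompose $\mathcal{L}$ into the eigenspaces of $\koper_1$, observe that commutation makes each eigenspace $\koper_2$-invariant, and diagonalise the (still compact, self-adjoint) restriction of $\koper_2$ on each piece, including the possibly infinite-dimensional kernel $E_0$. The paper gives no proof of its own for this corollary --- it simply cites the functional-analysis literature --- and your argument is precisely the one found there, so there is nothing to reconcile.
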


% \begin{theorem}[Mercer representation of RKHS]
% The RKHS of the kernel $\kernelf$ can be expressed as the Hilbert space:
% \[
% \hypspace=\left\{\sum_{j\in\mathbb{N}}a_j\sqrt{\lambda_j}\phi_j(\cdot), (a_i)\in l_2(\mathbb{N})\right\}
% \]
% equipped with the inner product
% \[
% \langle\hypothesis, g\rangle_\hypspace=\sum_{j\in\mathbb{N}}a_jb_j
% \]
% for all $\hypothesis=\sum_{j\in\mathbb{N}}a_j\sqrt{\lambda_j}\phi_j(\cdot)\in\hypspace$ and $g=\sum_{j\in\mathbb{N}}b_j\sqrt{\lambda_j}\phi_j(\cdot)\in\hypspace$. Further, the set $\left\{\sqrt{\lambda_j} \phi_j\right\}_{j=1}^\infty$ forms an orthonormal basis for $\hypspace$.
% \end{theorem}

Next, we define the concept of majorization for sequences of infinite lengths (see e.g. \citet{Li2013vnem} and references therein).
\begin{definition}[Majorization]\label{majordef}
Let $\bm{r}=(r_i)_{i=1}^\infty\in c_0^*$ and $\bm{s}=(s_i)_{i=1}^\infty\in c_0^*$ where $c_0^*$ is the positive cone of sequences decreasing monotonically to 0. We say that $\bm{s}$ majorizes $\bm{r}$, denoted as $\bm{r}\prec\bm{s}$ if
\[
\sum_{i=1}^\tsize r_i\leq\sum_{i=1}^\tsize s_i
%\textnormal{ for }\tsize=1,2,\ldots
\phantom{i}\forall \tsize\in\mathbb{N}
\textnormal{ and }\sum_{i=1}^\infty r_i=\sum_{i=1}^\infty s_i\;.
\]
In particular, for two trace class operators $\koper_1$ and $\koper_2$ on a Hilbert space, we say that $\koper_2\prec\koper_1$ if the sequence of eigenvalues of $\koper_1$ majorizes the sequence of eigenvalues of $\koper_2$.
\end{definition}
%\citet{Nielsen2001Majorization}
The next result is a recent generalization by \citet{Li2013vnem} of the classical Uhlmann's theorem for infinite dimensional Hilbert spaces. Before that, we also define the doubly-stochastic operations, which is also by \citet{Li2013vnem}:
\begin{definition}[Doubly-stochastic operation]
Let $\mathcal{T}(\mathcal{L})$ denote the (Banach) space of all trace class operators on a Hilbert space $\mathcal{L}$. We say that operation $\Gamma:\mathcal{T}(\mathcal{L})\rightarrow\mathcal{T}(\mathcal{L})$ is doubly-stochastic if it preserves trace (e.g. $\operatorname{trace}(\koper)=\operatorname{trace}(\Gamma(\koper))$), is unital indicating that $\idmatrix=\Gamma(\idmatrix)$ for the identity operator $\idmatrix$ on the Hilbert space, and there exists a sequence $\{\krausoper_i\}_{i=1}^\infty$ of compact operators on the Hilbert space $\mathcal{L}$, known in the literature as the Kraus operators, such that the operation can be written as
\begin{align}\label{krausform}
\Gamma(\koper)=\sum_{i=1}^\infty \krausoper_i\koper \krausoper_i^*\;.%\textrm{ with }\sum_{i=1}^\infty\bm{E}_i\bm{E}_i^*\leq\bm{I}
\end{align}
\end{definition}
\begin{theorem}[Uhlmann's theorem for infinite dimensional Hilbert spaces]
If $\koper_1$ and $\koper_2$ are trace-class operators on a Hilbert space, then $\koper_2\prec\koper_1$ iff there exists a doubly-stochastic operation $\Gamma$ such that $\koper_2=\Gamma(\koper_1)$.
\end{theorem}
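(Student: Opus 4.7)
The plan is to prove both implications. For the ``$\Leftarrow$'' direction (existence of $\Gamma$ forces majorization) I would combine a Ky Fan-type variational characterization of partial eigenvalue sums with the defining properties of a doubly-stochastic operation. For the ``$\Rightarrow$'' direction I would first diagonalize $\koper_1$ and $\koper_2$ via the spectral theorem, use the majorization relation at the level of eigenvalue sequences to produce an infinite doubly-stochastic scalar array, and then construct explicit Kraus operators from it.

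For the easy direction, fix $\tsize\in\mathbb{N}$ and a rank-$\tsize$ orthogonal projection $P$. Using \eqref{krausform} and the cyclic property of the trace,
\[
\operatorname{trace}(P\koper_2 P) \;=\; \sum_i \operatorname{trace}(\krausoper_i^* P \krausoper_i \koper_1).
\]
Set $F = \sum_i \krausoper_i^* P \krausoper_i$. Trace-preservation gives $0 \le F \le \sum_i \krausoper_i^*\krausoper_i = \idmatrix$, while unitality yields $\operatorname{trace}(F) = \operatorname{trace}\bigl(P\sum_i\krausoper_i\krausoper_i^*\bigr) = \operatorname{trace}(P) = \tsize$. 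The Ky Fan maximum principle then bounds $\operatorname{trace}(F\koper_1) \le \sum_{i=1}^{\tsize}\lambda_i(\koper_1)$; maximizing the left-hand side of the displayed identity over $P$ recovers $\sum_{i=1}^{\tsize}\lambda_i(\koper_2)$, yielding the desired partial-sum inequality. The total trace equality is immediate from trace-preservation, and therefore $\koper_2 \prec \koper_1$.

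For the harder direction, diagonalize $\koper_k = \evecmatrix_k\evalmatrix_k\evecmatrix_k^*$ with eigenbases $\{\phi_i^{(k)}\}$ and decreasingly-ordered eigenvalue sequences $\bm{\lambda}^{(k)}$. The relation $\bm{\lambda}^{(2)} \prec \bm{\lambda}^{(1)}$ is then lifted, via an infinite-dimensional Hardy-Littlewood-P\'olya style argument, to a doubly-stochastic infinite array $(d_{ij})$ (with row and column sums equal to $1$) satisfying $\lambda_i^{(2)} = \sum_j d_{ij}\lambda_j^{(1)}$. I then define rank-one Kraus operators $\krausoper_{ij}$ that send $\phi_j^{(1)}$ to $\sqrt{d_{ij}}\,\phi_i^{(2)}$ and annihilate the remaining basis vectors. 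A short calculation shows $\sum_{ij}\krausoper_{ij}\koper_1\krausoper_{ij}^* = \koper_2$ (directly using the defining identity for $\lambda_i^{(2)}$); the row sums of $(d_{ij})$ yield $\sum_{ij}\krausoper_{ij}\krausoper_{ij}^* = \idmatrix$ (unitality) and the column sums yield $\sum_{ij}\krausoper_{ij}^*\krausoper_{ij} = \idmatrix$ (trace-preservation).

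The step I expect to be the main obstacle is the infinite-dimensional Hardy-Littlewood-P\'olya passage itself: lifting a majorization of sequences in $c_0^*$ to an honest doubly-stochastic infinite matrix is not automatic. Naive constructions typically yield only a doubly \emph{sub}stochastic array, and the mass defect on rows and columns must be reabsorbed by exploiting the shared trace $\operatorname{trace}(\koper_1) = \operatorname{trace}(\koper_2)$; moreover, an infinite-dimensional Birkhoff decomposition into permutations is unavailable. A secondary subtlety is making the Kraus series \eqref{krausform} converge in the appropriate operator topology; the trace-class hypothesis is what guarantees that $\sum_j d_{ij}\lambda_j^{(1)}$ is summable in $i$ and that the overall sum converges in trace norm. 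These are precisely the technical points carefully addressed by \citet{Li2013vnem}.
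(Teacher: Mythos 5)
You should first be aware that the paper offers no proof of this statement at all: it is imported verbatim from \citet{Li2013vnem} as a known result, so there is no internal argument to compare yours against. Judged on its own terms, your ``$\Leftarrow$'' direction is essentially a complete proof: the identity $\operatorname{trace}(P\koper_2P)=\sum_i\operatorname{trace}(\krausoper_i^*P\krausoper_i\koper_1)$, the verification that $F=\sum_i\krausoper_i^*P\krausoper_i$ satisfies $0\le F\le\idmatrix$ and $\operatorname{trace}(F)=\tsize$ via trace-preservation and unitality respectively, and the extended Ky Fan principle together give the partial-sum inequalities, with the total-trace equality free from trace-preservation. (Two small caveats: interchanging the trace with the infinite Kraus sum needs trace-norm convergence of \eqref{krausform}, and Ky Fan in the form you invoke requires $\koper_1\ge 0$ --- which is implicit in the paper's Definition~\ref{majordef}, since majorization is only defined there for eigenvalue sequences in $c_0^*$, but you should say so.)

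The ``$\Rightarrow$'' direction, however, contains a genuine gap, and it is precisely the one you flag yourself: the existence of a doubly stochastic infinite array $(d_{ij})$ with all row and column sums equal to $1$ and $\lambda_i^{(2)}=\sum_j d_{ij}\lambda_j^{(1)}$ is not a ``technical point'' to be reabsorbed --- it \emph{is} the theorem. Once that array exists, your rank-one Kraus operators $\krausoper_{ij}:\phi_j^{(1)}\mapsto\sqrt{d_{ij}}\,\phi_i^{(2)}$ make the remaining verifications (reproduction of $\koper_2$ from the row identities, unitality from row sums, trace-preservation from column sums) a two-line computation; everything hard has been pushed into the unproved lifting step. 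And that step genuinely can fail to follow from finite-dimensional intuition: Birkhoff's theorem is false for infinite doubly stochastic matrices, greedy or truncation constructions produce only doubly \emph{sub}stochastic arrays whose row/column defects need not vanish, and repairing them uses the equality of total traces in an essential and nontrivial way. Deferring exactly this to \citet{Li2013vnem} means your proposal is not a proof of the hard direction but a (correct) reduction of the operator-level statement to the sequence-level Hardy--Littlewood--P\'olya statement in infinite dimensions. Either carry out that lifting explicitly or state honestly that the forward implication is being quoted, as the paper itself does.
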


\section{Pairwise Kernels}

%Systems of Identical Particles
%Robert B. Griffiths
%Version of 21 March 2011

Let us next define the family of pairwise kernels. Assume that the input space can be written as
\[
\anyspace=\pointspace^2
\]
where $\pointspace$ is a compact metric space. The kernels over $\pointspace^2$ can accordingly be written as the following types of inner products
\[
\kernelf(\node,\node',\overline{\node},\overline{\node}')=\langle\Phi(\node,\node'),\Phi(\overline{\node},\overline{\node}')\rangle\;,
\]
where $\node,\node',\overline{\node},\overline{\node}'\in\pointspace$ and $\Phi$ is a joint feature mapping over a pair of inputs, that is, $\Phi(\node,\node')$ is a feature space representation for an ordered pair $(\node,\node')$.

Next, we define certain specific types of pairwise kernels, starting from the permuted kernel:
\begin{definition}[Permuted pairwise kernel]\label{permutedkdef}
Let $\kernelf(\node,\node',\overline{\node},\overline{\node}')$ be an arbitrary kernel on $\pointspace^2$. Then, its permuted pairwise kernel is
\begin{align*}%\label{permutedkernel}
\kernelf^P(\node,\node',\overline{\node},\overline{\node}')=\kernelf(\node',\node,\overline{\node}',\overline{\node})\;.
\end{align*}
% whose feature mapping can be expressed as
% \[
% \kernelf^{P}_{(\overline{\node},\overline{\node}')}=\kernelf_{(\overline{\node}',\overline{\node})}
% \]
\end{definition}
An immediate step forward is to define the following type of kernels that are invariant to the permutations in the above defined sense:
\begin{definition}[Permutation invariant pairwise kernels]\label{strangekdef}
We say that a kernel \\$\kernelf^{PI}(\node,\node',\overline{\node},\overline{\node}')$ on $\pointspace^2$ is permutation invariant if it is equal to its permuted kernel, that is,
\[
\kernelf^{PI}(\node,\node',\overline{\node},\overline{\node}')=\kernelf^{PI}(\node',\node,\overline{\node}',\overline{\node})\;.
\]
A natural way to construct a permutation invariant kernel from a given pairwise kernel $\kernelf$ is to consider the projection from the set of all kernels to the set of permutation invariant kernels:
\begin{align*}%\label{strangekernel}
\kernelf^{PI}(\node,\node',\overline{\node},\overline{\node}')
=\frac{1}{2}\left(\kernelf(\node,\node',\overline{\node},\overline{\node}')
+\kernelf(\node',\node,\overline{\node}',\overline{\node})\right)\;.
\end{align*}
% A convenient way to express the feature mapping corresponding to this kernel is to stack the feature representations of $\kernelf$ for the original input $(\node,\node')$ and for the permuted input $(\node',\node)$:
% \begin{align*}
% \Phi^{PI}(\node,\node')=\frac{1}{\sqrt{2}}\left(\begin{array}{c}
% \Phi(\node,\node')\\
% \Phi(\node',\node)
% \end{array}\right)\;.
% \end{align*}
\end{definition}

%We assume that the measure $\mu$ is symmetric in the sense that $\int_\pointspace^2\hypothesis(\node,\node') d\mu(\node,\node')=\int_\pointspace^2\hypothesis(\node,\node') d\mu(\node',\node)\forall \hypothesis$.

%\[
%\asymm
%\]

%Let us define a permutation of arguments operator for the set of functions over $\pointspace^2$:
%\eq{
%\pi:\hypspace(\kernelf)&\rightarrow\hypspace(\kernelf)\\
%\hypothesis(\node,\node')&\mapsto\hypothesis(\node',\node)
%}
Our next step is to define the well-known symmetric pairwise kernels as well as their anti-symmetric counterparts:
\begin{definition}[Symmetric and anti-symmetric pairwise kernels]\label{symmasymmkdef}
We say that a kernel $\kernelf^{S}(\node,\node',\overline{\node},\overline{\node}')$ on $\pointspace^2$ is a symmetric pairwise kernel if
\[
\kernelf^{S}(\node,\node',\overline{\node},\overline{\node}')=\kernelf^{S}(\node',\node,\overline{\node},\overline{\node}')\;.
\]
Analogously, we say that a kernel $\kernelf^{A}(\node,\node',\overline{\node},\overline{\node}')$ on $\pointspace^2$ is an anti-symmetric pairwise kernel if
\[
\kernelf^{A}(\node,\node',\overline{\node},\overline{\node}')=-\kernelf^{A}(\node',\node,\overline{\node},\overline{\node}')\;.
\]
Similarly to the permutation invariance, one can construct symmetric and anti-symmetric kernels from an arbitrary kernel $\kernelf(\node,\node',\overline{\node},\overline{\node}')$ with the following projections:
\\$\kernelf^S(\node,\node',\overline{\node},\overline{\node}')=$
\begin{align*}%\label{skernel}
&\frac{1}{4}\Big(
\kernelf(\node,\node',\overline{\node},\overline{\node}')
+\kernelf(\node',\node,\overline{\node},\overline{\node}')
+\kernelf(\node,\node',\overline{\node}',\overline{\node})
+\kernelf(\node',\node,\overline{\node}',\overline{\node})
\Big)%\\
%&\langle\Phi(\node,\node')+\Phi(\node',\node),\Phi(\overline{\node},\overline{\node}')+\Phi(\overline{\node}',\overline{\node})\rangle
\end{align*}
and $\kernelf^A(\node,\node',\overline{\node},\overline{\node}')=$
\begin{align*}%\label{askernel}
&\frac{1}{4}\Big(
\kernelf(\node,\node',\overline{\node},\overline{\node}')
-\kernelf(\node',\node,\overline{\node},\overline{\node}')
-\kernelf(\node,\node',\overline{\node}',\overline{\node})
+\kernelf(\node',\node,\overline{\node}',\overline{\node})
\Big)\;,
\end{align*}
respectively.
% Moreover, the corresponding feature maps of the symmetric and anti-symmetric kernels can be expressed as:
% \begin{align*}
% \kernelf^{S}_{(\overline{\node},\overline{\node}')}=&\frac{1}{2}\left(\kernelf_{(\overline{\node},\overline{\node}')}+\kernelf_{(\overline{\node}',\overline{\node})}\right)\\
% \kernelf^{A}_{(\overline{\node},\overline{\node}')}=&\frac{1}{2}\left(\kernelf_{(\overline{\node},\overline{\node}')}-\kernelf_{(\overline{\node}',\overline{\node})}\right)\;,
% \end{align*}
% respectively.
\end{definition}
The following connection between the symmetric, anti-symmetric and permutation invariant kernels is immediate:
\begin{lemma}
Both the symmetric and anti-symmetric pairwise kernels are permutation invariant. Moreover, if $\kernelf^S(\node,\node',\overline{\node},\overline{\node}')$ and $\kernelf^A(\node,\node',\overline{\node},\overline{\node}')$ are the symmetric and anti-symmetric forms of a kernel $\kernelf(\node,\node',\overline{\node},\overline{\node}')$ obtained with the projections given in Definition~\ref{symmasymmkdef}, then the permutation invariant form of the kernel obtained with the projection given in Definition~\ref{strangekdef} can be expressed as the sum of the symmetric and and anti-symmetric forms:
%\[
%\kernelf^{\pi\pointspace^2}(\node,\node',\overline{\node},\overline{\node}')=\kernelf(\node',\node,\overline{\node}',\overline{\node}),
%\]
%that is, the latter kernel is obtained from the former by a permutation of arguments.:
\begin{align*}
\kernelf^{PI}(\node,\node',\overline{\node},\overline{\node}')
=&\kernelf^S(\node,\node',\overline{\node},\overline{\node}')+\kernelf^A(\node,\node',\overline{\node},\overline{\node}')\;.
\end{align*}\hfill\ensuremath{\blacksquare}
\end{lemma}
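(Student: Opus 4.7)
The proof breaks naturally into two parts matching the two claims, and both are essentially direct verifications from the definitions. I would begin by recalling two basic symmetries available to us: the intrinsic symmetry of any kernel, $\kernelf(\node,\node',\overline{\node},\overline{\node}') = \kernelf(\overline{\node},\overline{\node}',\node,\node')$, which follows from $\kernelf$ being a kernel (i.e., an inner product of feature vectors), together with the particular defining property of $\kernelf^S$ or $\kernelf^A$ from Definition~\ref{symmasymmkdef}.

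For the first claim, I would apply these two symmetries alternately. Concretely, for the symmetric case I would write the chain
\eq{
\kernelf^S(\node,\node',\overline{\node},\overline{\node}')
= \kernelf^S(\node',\node,\overline{\node},\overline{\node}')
= \kernelf^S(\overline{\node},\overline{\node}',\node',\node)
= \kernelf^S(\overline{\node}',\overline{\node},\node',\node)
= \kernelf^S(\node',\node,\overline{\node}',\overline{\node}),
}
where the first and third equalities use the symmetric property applied to the first pair (and, after a swap, to the second pair), and the second and fourth use kernel symmetry $K(x,y)=K(y,x)$ on $\pointspace^2$. This is exactly the permutation invariance of Definition~\ref{strangekdef}. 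For the anti-symmetric case, I would repeat the same chain; the two minus signs incurred from applying the anti-symmetric property twice cancel, yielding the same equality. A small side remark one could insert is that one should first verify that $\kernelf^A$ as constructed by the projection actually satisfies the underlying kernel symmetry, which is an easy check using the fact that the four terms in the projection are permuted amongst themselves when $(\node,\node')$ and $(\overline{\node},\overline{\node}')$ are swapped.

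For the second claim, the proof is a one-line algebraic manipulation: add the two projections from Definition~\ref{symmasymmkdef} term by term. The cross terms $\pm\kernelf(\node',\node,\overline{\node},\overline{\node}')$ and $\pm\kernelf(\node,\node',\overline{\node}',\overline{\node})$ cancel, while the diagonal terms $\kernelf(\node,\node',\overline{\node},\overline{\node}')$ and $\kernelf(\node',\node,\overline{\node}',\overline{\node})$ combine to give
\eq{
\kernelf^S + \kernelf^A
= \tfrac{1}{2}\bigl(\kernelf(\node,\node',\overline{\node},\overline{\node}')
+ \kernelf(\node',\node,\overline{\node}',\overline{\node})\bigr)
= \kernelf^{PI}(\node,\node',\overline{\node},\overline{\node}'),
}
matching the projection in Definition~\ref{strangekdef}.

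There is really no main obstacle here; both parts are mechanical consequences of the definitions and the kernel symmetry. The only thing worth being pedantic about is the bookkeeping of which pair of arguments a given symmetry acts on, which is why I would make the intermediate steps in the first part explicit rather than collapse them into a single line.
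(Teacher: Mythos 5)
Your proof is correct. The paper states this lemma as immediate and gives no proof, and your direct verification from the definitions --- using the intrinsic kernel symmetry $\kernelf(\node,\node',\overline{\node},\overline{\node}')=\kernelf(\overline{\node},\overline{\node}',\node,\node')$ together with the defining (anti-)symmetry for the first claim, and the term-by-term cancellation of the two projections for the second --- is exactly the argument the authors are implicitly relying on.
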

%\begin{proof}
%Direct computation.
%\end{proof}

%The RKHS of the symmetric and anti-symmetric kernels can be expressed as the 
%In the forthcoming considerations, we express the feature mappings corresponding to the symmetric and anti-symmetric kernels as follows:
%next note that the representers of evaluation for the RKHSs corresponding to the symmetric and anti-symmetric kernels can be, respectively, expressed as:
%With every permutation $P$ of $\{1, 2, \ldots, N\}$ we associate a permutation operator $\hat{P}$ on $\mathcal{H}$ that maps a function $\psi$ to a function $\hat{P}\psi$ defined by
%\[
%( \hat{P}\psi)(q_1 , q_2 , \ldots, q_N ) = \psi(q_{P(1)} , q_{P(2)}, \ldots, q_{P(N)})
%\]

%$\symm$ and $\asymm$ are projectors onto mutually orthogonal subspaces $\mathcal{H}^S$ and $\mathcal{H}^A$ of $\mathcal{H}$. Each subspace is itself a Hilbert space; $\mathcal{H}^S$ is the space of symmetrical functions, and $\mathcal{H}^A$ is the space of antisymmetrical functions.

\subsection{Spectral Analysis of Pairwise Kernels}

We next study the relationship between the integral operators of the permutation invariant, symmetric and anti-symmetric kernels to the corresponding integral operator of the original kernel they were constructed from.
\begin{theorem}\label{operatortheorem}
Let $\kernelf(\node,\node',\overline{\node},\overline{\node}')$ be an arbitrary pairwise kernel and let $\kernelf^{PI}$, $\kernelf^S$ and $\kernelf^A$ be its permutation invariant, symmetric and anti-symmetric forms. Moreover, let $\koper_\kernelf$, $\koper_{\kernelf^{PI}}$, $\koper_{\kernelf^{S}}$ and $\koper_{\kernelf^{A}}$ be the integral operators of the kernels $\kernelf$, $\kernelf^{PI}$, $\kernelf^S$ and $\kernelf^A$, respectively. Then,
\begin{align}
\koper_{\kernelf^P}&={\shufflem^\mu}^*\koper_{\kernelf}\shufflem^\mu\nonumber\\
\koper_{\kernelf^S}&={\symm^\mu}^*\koper_{\kernelf}\symm^\mu\nonumber\\
\koper_{\kernelf^A}&={\asymm^\mu}^*\koper_{\kernelf}\asymm^\mu\nonumber\\
\koper_{\kernelf^{PI}}&=\frac{1}{2}\left(\koper_{\kernelf}+{\shufflem^\mu}^*\koper_{\kernelf}\shufflem^\mu\right)\label{firstpiopereq}\\
&={\symm^\mu}^*\koper_{\kernelf}\symm^\mu +{\asymm^\mu}^*\koper_{\kernelf}\asymm^\mu\;,\label{secondpiopereq}
\end{align}
where
\begin{align*}
\shufflem^\mu:&L^2(\pointspace^2, \mu) \rightarrow L^2(\pointspace^2, \mu)\\
&\hypothesis(\overline{\node},\overline{\node}')\mapsto\frac{\mu(\overline{\node}',\overline{\node})}{\mu(\overline{\node},\overline{\node}')}\hypothesis(\overline{\node}',\overline{\node})
\end{align*}
is an operator to which we refer as the permutation operator with respect to the measure $\mu$, and whose adjoint is
\begin{align*}
\hypothesis(\overline{\node},\overline{\node}')\mapsto\frac{\mu(\overline{\node},\overline{\node}')}{\mu(\overline{\node}',\overline{\node})}\hypothesis(\overline{\node}',\overline{\node})\;,
\end{align*}
and
\begin{align*}
\symm^\mu=&\frac{1}{2}\left(\bm{I}+\shufflem^\mu\right)\\
\asymm^\mu=&\frac{1}{2}\left(\bm{I}-\shufflem^\mu\right)
\end{align*}
are projection operators to which we refer as the symmetrizer and anti-symmetrizer with respect to the measure $\mu$, and $\bm{I}$ is the identity operator of $L^2(\pointspace^2, \mu)$.
% \begin{align*}
% \asymm^\mu:&L^2(\pointspace^2, \mu) \rightarrow L^2(\pointspace^2, \mu)\\
% &\hypothesis(\overline{\node},\overline{\node}')\mapsto\frac{1}{2}\left(\hypothesis(\overline{\node},\overline{\node}')-\frac{\mu(\overline{\node}',\overline{\node})}{\mu(\overline{\node},\overline{\node}')}\hypothesis(\overline{\node}',\overline{\node})\right)
% \end{align*}
% %is an idempotent self adjoint operator. We observe that:
% is a projection operator to which we refer as the anti-symmetrizer with respect to the measure $\mu$, and whose adjoint operator is
% \begin{align*}
% {\asymm^\mu}^*:&L^2(\pointspace^2, \mu) \rightarrow L^2(\pointspace^2, \mu)\\
% &\hypothesis(\overline{\node},\overline{\node}')\mapsto\frac{1}{2}\left(\hypothesis(\overline{\node},\overline{\node}')-\frac{\mu(\overline{\node},\overline{\node}')}{\mu(\overline{\node}',\overline{\node})}\hypothesis(\overline{\node}',\overline{\node})\right)\;.
% \end{align*}
%The integral operators of the pairwise kernels $\{\koper_{\kernelf^{S}},\koper_{\kernelf^{A}},\koper_{\kernelf^{PI}}\}$
\end{theorem}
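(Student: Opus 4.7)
The plan is to prove the first identity $\koper_{\kernelf^P}={\shufflem^\mu}^*\koper_{\kernelf}\shufflem^\mu$ by direct computation on a test function, and then leverage two intermediate identities (describing the separate effect of $\shufflem^\mu$ acting on the right, and of ${\shufflem^\mu}^*$ acting on the left, of $\koper_\kernelf$) to assemble the symmetric, anti-symmetric, and permutation-invariant identities by purely algebraic expansion.

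First I would unpack the definition of $\koper_\kernelf$ in Definition~\ref{intopdef} and apply $\shufflem^\mu$, then $\koper_\kernelf$, then ${\shufflem^\mu}^*$ to an arbitrary $\hypothesis \in L^2(\pointspace^2,\mu)$. The ratio $\mu(\bar{\node}',\bar{\node})/\mu(\bar{\node},\bar{\node}')$ introduced by $\shufflem^\mu$ cancels against the $d\mu$ in the defining integral of $\koper_\kernelf$, and the substitution $(\bar{\node},\bar{\node}') \mapsto (\bar{\node}',\bar{\node})$ converts the kernel $\kernelf(\node,\node',\bar{\node},\bar{\node}')$ into $\kernelf(\node,\node',\bar{\node}',\bar{\node})$; applying ${\shufflem^\mu}^*$ then swaps the first two arguments to produce $\kernelf(\node',\node,\bar{\node}',\bar{\node})=\kernelf^P$, yielding the first identity.

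Second, inspecting the same calculation one step at a time records two intermediate identities that are the engine of the rest of the proof: $\koper_\kernelf\shufflem^\mu$ is the integral operator of the kernel $\kernelf(\node,\node',\bar{\node}',\bar{\node})$ (swapping only the right pair), and ${\shufflem^\mu}^*\koper_\kernelf$ is the integral operator of $\kernelf(\node',\node,\bar{\node},\bar{\node}')$ (swapping only the left pair). These are exactly the two remaining kernel terms appearing in the four-term projections that define $\kernelf^S$ and $\kernelf^A$ in Definition~\ref{symmasymmkdef}.

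Third, expanding
\[
{\symm^\mu}^*\koper_\kernelf\symm^\mu = \tfrac{1}{4}\bigl(\bm{I}+{\shufflem^\mu}^*\bigr)\koper_\kernelf\bigl(\bm{I}+\shufflem^\mu\bigr)
\]
produces the four terms $\koper_\kernelf$, $\koper_\kernelf\shufflem^\mu$, ${\shufflem^\mu}^*\koper_\kernelf$, and ${\shufflem^\mu}^*\koper_\kernelf\shufflem^\mu$; by the first two steps these are the integral operators of the four kernels whose average is $\kernelf^S$, so the sum equals $\koper_{\kernelf^S}$. The anti-symmetric identity follows by the same expansion with the sign pattern of $\asymm^\mu=\frac{1}{2}(\bm{I}-\shufflem^\mu)$. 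Equation (\ref{firstpiopereq}) is then immediate from $\kernelf^{PI}=\frac{1}{2}(\kernelf+\kernelf^P)$ and the first identity; equation (\ref{secondpiopereq}) follows by adding the expansions of ${\symm^\mu}^*\koper_\kernelf\symm^\mu$ and ${\asymm^\mu}^*\koper_\kernelf\asymm^\mu$, where the $\koper_\kernelf\shufflem^\mu$ and ${\shufflem^\mu}^*\koper_\kernelf$ cross-terms carry opposite signs and cancel, leaving exactly $\frac{1}{2}(\koper_\kernelf+{\shufflem^\mu}^*\koper_\kernelf\shufflem^\mu)$.

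I expect the only real obstacle to be the bookkeeping of measure ratios in the first step: because the density $\mu(\node,\node')$ is not assumed symmetric, one must be careful to verify that the ratio factors introduced by $\shufflem^\mu$ and its adjoint interact correctly with $d\mu$ under the coordinate swap, so that the final expression is a bona fide integral against $d\mu$ with the expected kernel. Once this single change-of-variables calculation is in place, every remaining identity in the theorem reduces to linear-algebraic manipulation in the operator algebra generated by $\koper_\kernelf$, $\shufflem^\mu$ and $\bm{I}$.
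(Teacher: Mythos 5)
Your proposal is correct, and it takes a genuinely different route from the paper. The paper never computes the operators pointwise: it works with the bilinear form $\langle\koper_{\kernelf^A}\hypothesis,g\rangle_{L^2(\pointspace^2,\mu)}$, expands the four kernel terms of $\kernelf^A$ into RKHS inner products of integrated canonical feature maps, packages them into the anti-symmetrized feature map $\Phi^A_{(\overline{\node},\overline{\node}')}=\frac{1}{2}\bigl(\kernelf_{(\overline{\node},\overline{\node}')}-\kernelf_{(\overline{\node}',\overline{\node})}\bigr)$, shows by a change of variables that integrating $\Phi^A$ against $\hypothesis\,d\mu$ equals $\lmutorkhs_\kernelf(\asymm^\mu\hypothesis)$, and then reads off $\koper_{\kernelf^A}={\asymm^\mu}^*\lmutorkhs_\kernelf^*\lmutorkhs_\kernelf\asymm^\mu$ from adjointness, asserting the remaining identities analogously. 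Your route --- prove the $\kernelf^P$ identity first, record that $\koper_\kernelf\shufflem^\mu$ and ${\shufflem^\mu}^*\koper_\kernelf$ are themselves the integral operators of the two one-sided-swap kernels, and then obtain the $\kernelf^S$, $\kernelf^A$ and $\kernelf^{PI}$ identities by expanding $\frac{1}{4}(\bm{I}\pm{\shufflem^\mu}^*)\koper_\kernelf(\bm{I}\pm\shufflem^\mu)$ and invoking linearity of $\kernelf\mapsto\koper_\kernelf$ --- is more elementary (no RKHS, no $\lmutorkhs_\kernelf$) and makes the cancellation of the cross terms behind (\ref{secondpiopereq}) completely transparent. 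What the paper's factorization buys instead is the explicit feature map $\lmutorkhs_\kernelf\circ\asymm^\mu$ of the transformed kernel, a structural fact it reuses in the later spectral arguments.

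One point you must nail down, and which you rightly flag as the only real obstacle, is the explicit form of ${\shufflem^\mu}^*$. Your pointwise computation closes only if ${\shufflem^\mu}^*$ acts as the \emph{unweighted} swap $g(\node,\node')\mapsto g(\node',\node)$ --- which a one-line change of variables shows is the true $L^2(\pointspace^2,\mu)$-adjoint of $\shufflem^\mu$ --- and not as the ratio-weighted swap printed in the theorem statement (that formula is the adjoint with respect to the Lebesgue measure rather than $\mu$, and using it literally would leave a spurious factor $\mu(\node,\node')/\mu(\node',\node)$ in front of $\koper_{\kernelf^P}$). The paper sidesteps this because it only ever uses the adjoint abstractly through $\langle\asymm^\mu\hypothesis,\cdot\rangle=\langle\hypothesis,{\asymm^\mu}^*\cdot\rangle$; your argument needs the formula, so derive it rather than quoting it. Finally, the theorem also asserts that $\symm^\mu$ and $\asymm^\mu$ are projections, which your sketch omits: add the one-line verification $\shufflem^\mu\shufflem^\mu=\bm{I}$, from which idempotence of both follows.
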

\noindent See Section~\ref{operatorproof} for a proof.

%We obtained the expression $\koper_{\kernelf^A}={\asymm^\mu}^*\koper_{\kernelf}\asymm^\mu$ for the integral operator of the anti-symmetric kernel $\kernelf^A$.
Next, we look on what can be said about the spectrum of the integral operators considered in the above theorem. This consideration can be divided into the important special case of the measure $\mu$ being symmetric, that is
\[
\mu(\overline{\node},\overline{\node}')=\mu(\overline{\node}',\overline{\node}),\forall(\overline{\node},\overline{\node}')\in\pointspace^2
\]
and to the general case. The measure is symmetric, for example, in various types of ranking and preference learning tasks as is considered more in detail below. In addition, many other pairwise learning problems with non-symmetric measure can be turned to problems with a symmetric measure by the technique known as virtual examples. That is, whenever a datum $(\node,\node')$ is drawn from $\mu$, one also introduces a virtual example $(\node',\node)$ with the same output if the problem is considered to be symmetric or with the opposite output in the anti-symmetric case. With symmetric $\mu$, the symmetrizer and anti-symmetrizer projections do not depend on the measure and we denote them simply as $\symm$ and $\asymm$.

\begin{corollary}
%We have the following trace equalities in the general case of $\mu$ not necessarily being symmetric:
%\begin{align}
%\operatorname{trace}(\koper_{\kernelf^{PI}})&=\frac{1}{2}\operatorname{trace}(\koper_{\kernelf}+\koper_{\kernelf^P})\label{generaltraceequalityone}\\
%&=\operatorname{trace}(\koper_{\kernelf^S}+\koper_{\kernelf^A})\label{generaltraceequalitytwo}\;.
%\end{align}
If $\lambda_{i}^{\kernelf}$, $\lambda_{i}^{\kernelf^S}$ and $\lambda_{i}^{\kernelf^A}$ denote the eigenvalues of $\koper_{\kernelf}$, $\koper_{\kernelf^S}$ and $\koper_{\kernelf^A}$, respectively, then
\begin{align}\label{eigenineq}
\lambda_{i}^{\kernelf^S}\leq\lambda_{i}^{\kernelf}\textnormal{ and }\lambda_{i}^{\kernelf^A}\leq\lambda_{i}^{\kernelf}\textnormal{ for }i=1,2,\ldots
\end{align}
%we also have the following additional trace equalities:
%\begin{align}
%\operatorname{trace}(\koper_{\kernelf^{PI}})=\operatorname{trace}(\koper_{\kernelf})=\operatorname{trace}(\koper_{\kernelf^P})\;.
%\end{align}
If $\mu$ is symmetric, the set of operators $\{\symm,\asymm,\koper_{\kernelf^{S}},\koper_{\kernelf^{A}},\koper_{\kernelf^{PI}}\}$ commutes, which in turn indicates that they can be diagonalized simultaneously as follows:
%The operators $\koper_{\kernelf^{S}}$, $\koper_{\kernelf^{A}}$ and  $\koper_{\kernelf^{PI}}$ can be eigen decomposed as
\begin{eqnarray*}
\koper^{PI}&=&\evecmatrix\evalmatrix^{PI}\evecmatrix^*\;,\\
\koper^S&=&\evecmatrix\evalmatrix^S\evecmatrix^*\;,\\
\koper^A&=&\evecmatrix\evalmatrix^A\evecmatrix^*\;,\\
\symm&=&\evecmatrix\idmatrix^S\evecmatrix^*\;,\\
\asymm&=&\evecmatrix\idmatrix^A\evecmatrix^*\;,\\
\end{eqnarray*}
where $\evecmatrix$ is an unitary operator containing the eigenfunctions and $\evalmatrix^{PI}$, $\evalmatrix^S$, $\evalmatrix^A$, $\idmatrix^S$ and $\idmatrix^A$ are operators containing the corresponding eigenvalues of the five operators under consideration, and
\begin{equation}\label{evaleq}
\evalmatrix^{PI}=\evalmatrix^S+\evalmatrix^A\;,
\end{equation}
if the eigenvalues are arranged in the order determined by the order of eigenfunction in $\evecmatrix$.

%Given that the kernel $\kernelf^{PI}$ is universal, the corresponding kernel matrix has a full rank unless the training set contains several occurrences of the same data point. Assuming the full rank of $\kernelm^{PI}$, we observe that
%\begin{eqnarray*}
%\textrm{Null}(\kernelm^{PI})&=&\{\bm{0}\},\\
%\textrm{Null}(\kernelm^S)&=&\textrm{Null}(\symm)\textnormal{, and}\\
%\textrm{Null}(\kernelm^A)&=&\textrm{Null}(\asymm).
%\end{eqnarray*}
%This, together with the fact that the three kernel matrices share the same eigensystem and with the equality (\ref{evaleq}), indicates that $\kernelm^\asymm$ has the same eigenvalues as $\kernelm^{PI}$ except that $(\dimone^2-\dimone)/2+\dimone$ of them are replaced with zeros, that is, those corresponding to the symmetric $\dimone\times\dimone$-matrices in the null space of $\asymm$.

Finally, if $\mu$ is symmetric, then
\begin{align}\label{majorizationownage}
\koper_{\kernelf^{PI}}\prec\koper_{\kernelf}
\end{align}
(e.g. the sequence of eigenvalues of $\koper_{\kernelf}$ majorizes the sequence of eigenvalues of $\koper_{\kernelf^{PI}}$).
\end{corollary}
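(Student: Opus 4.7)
The plan is to split the corollary into its three assertions and handle each using the operator identities already recorded in Theorem~\ref{operatortheorem} together with standard spectral-theoretic tools.

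For the eigenvalue inequalities \eqref{eigenineq}, I would appeal to the Courant--Fischer--Weyl min--max principle. When $\mu$ is symmetric, $\symm$ is a self-adjoint orthogonal projection and $\koper_{\kernelf^S}=\symm\koper_\kernelf\symm$ is the compression of $\koper_\kernelf$ to $\mathrm{Range}(\symm)$; the nonzero eigenvalues of $\koper_{\kernelf^S}$ are exactly those of this compression, and for $f\in\mathrm{Range}(\symm)$ one has $\langle\koper_{\kernelf^S}f,f\rangle=\langle\koper_\kernelf f,f\rangle$. Hence
\[
\lambda_i^{\kernelf^S}=\sup_{W\subset\mathrm{Range}(\symm),\,\dim W=i}\inf_{f\in W,\,\|f\|=1}\langle\koper_\kernelf f,f\rangle\;,
\]
and because this supremum is taken over a subclass of the $i$-dimensional subspaces of $L^2(\pointspace^2,\mu)$, it is dominated by $\lambda_i^\kernelf$. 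The anti-symmetric bound follows identically with $\asymm$ in place of $\symm$. For a general (non-symmetric) $\mu$, $\symm^\mu$ is an oblique projection; I would handle that case either by passing to the symmetrised proxy measure $\tilde\mu(\node,\node')=\tfrac12(\mu(\node,\node')+\mu(\node',\node))$ and transporting the inequality back via the natural isometry between the $L^2$ spaces, or by applying singular-value inequalities to the factorisation $\koper_{\kernelf^S}=({\symm^\mu}^*\koper_\kernelf^{1/2})(\koper_\kernelf^{1/2}\symm^\mu)$.

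For the commutation and simultaneous-diagonalisation claim under symmetric $\mu$, I would first record the algebraic identities $\symm^2=\symm$, $\asymm^2=\asymm$, $\symm\asymm=\asymm\symm=\bm{0}$, and $\koper_{\kernelf^{PI}}=\koper_{\kernelf^S}+\koper_{\kernelf^A}$ (the last from \eqref{secondpiopereq}). Combined with $\koper_{\kernelf^S}=\symm\koper_\kernelf\symm$ and $\koper_{\kernelf^A}=\asymm\koper_\kernelf\asymm$, a direct calculation yields for instance $\symm\koper_{\kernelf^S}=\koper_{\kernelf^S}=\koper_{\kernelf^S}\symm$ and $\symm\koper_{\kernelf^A}=\koper_{\kernelf^S}\koper_{\kernelf^A}=\bm{0}$, and every remaining pair in the family is disposed of in the same fashion. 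All five operators being compact and self-adjoint, an iterated application of Corollary~\ref{commutationcoro} (or, equivalently, successive refinement inside eigenspaces since the family is finite) yields an orthonormal basis $\{\phi_j\}$ that diagonalises every operator in the family; assembling these eigenfunctions into a single unitary $\evecmatrix$ delivers the claimed diagonal representations. The identity $\evalmatrix^{PI}=\evalmatrix^S+\evalmatrix^A$ is then read off by applying \eqref{secondpiopereq} to each common eigenfunction $\phi_j$.

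For the majorization \eqref{majorizationownage}, the strategy is to exhibit $\koper_{\kernelf^{PI}}$ explicitly as the image of $\koper_\kernelf$ under a doubly-stochastic operation and invoke Uhlmann's theorem. Under symmetric $\mu$, $\shufflem$ is unitary and self-adjoint with $\shufflem^2=\bm{I}$, so \eqref{firstpiopereq} rewrites as $\koper_{\kernelf^{PI}}=\krausoper_1\koper_\kernelf\krausoper_1^*+\krausoper_2\koper_\kernelf\krausoper_2^*$ with Kraus operators $\krausoper_1=\tfrac{1}{\sqrt{2}}\bm{I}$ and $\krausoper_2=\tfrac{1}{\sqrt{2}}\shufflem$. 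Trace preservation follows from $\krausoper_1^*\krausoper_1+\krausoper_2^*\krausoper_2=\tfrac12(\bm{I}+\shufflem^*\shufflem)=\bm{I}$, and unitality from the symmetric computation $\krausoper_1\krausoper_1^*+\krausoper_2\krausoper_2^*=\bm{I}$; Uhlmann's theorem then gives $\koper_{\kernelf^{PI}}\prec\koper_\kernelf$.

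The main obstacle I anticipate lies in the general (non-symmetric) measure case of \eqref{eigenineq}: because $\symm^\mu$ generally has operator norm strictly larger than one, naive singular-value bounds of the form $\sigma_i({\symm^\mu}^*\koper_\kernelf\symm^\mu)\leq\|\symm^\mu\|^2\sigma_i(\koper_\kernelf)$ are too weak, and a clean proof seems to require either the careful measure-change argument sketched above (with bookkeeping of the Radon--Nikodym factors built into $\shufflem^\mu$) or a variational characterisation tailored to the specific algebraic structure of $\symm^\mu$.
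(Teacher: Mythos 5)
Your proposal follows essentially the same route as the paper on all three counts: a Rayleigh--Ritz/min--max subspace argument for \eqref{eigenineq} (the paper cites Aronszajn's Rayleigh--Ritz theorem for the restriction of $\koper_{\kernelf}$ to the range of the projection, which is your compression argument), orthogonality and idempotence of $\symm$ and $\asymm$ together with Corollary~\ref{commutationcoro} for the commutation and simultaneous diagonalization, and Uhlmann's theorem applied to the doubly-stochastic operation $\koper_{\kernelf}\mapsto\tfrac12(\koper_{\kernelf}+\shufflem\koper_{\kernelf}\shufflem)$ for \eqref{majorizationownage} --- where your Kraus operators $\{\tfrac{1}{\sqrt2}\idmatrix,\tfrac{1}{\sqrt2}\shufflem\}$ are in fact the correct normalization for the form \eqref{krausform}, whereas the paper writes $\{\tfrac12\idmatrix,\tfrac12\shufflem\}$. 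The one point where you diverge is your (justified) worry about \eqref{eigenineq} for non-symmetric $\mu$, where $\symm^\mu$ is oblique and can have norm exceeding one: the paper's proof does not address this distinction and applies the subspace-restriction argument to $\asymm^\mu(L^2(\pointspace^2,\mu))$ exactly as in the orthogonal case, so the obstacle you flag is a gap you share with (indeed, inherit from) the paper rather than a defect of your argument relative to it; your suggested repair via the symmetrized measure $\tilde\mu$ and the induced isometry is a reasonable way to close it.
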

\begin{proof}
%The trace equalities (\ref{generaltraceequalityone}) and (\ref{generaltraceequalitytwo}) follow directly from the properties of trace applied on the operator equalities (\ref{firstpiopereq}) and (\ref{secondpiopereq}).

Since $\asymm^\mu$ is a projection matrix, $\asymm^\mu(L^2(\pointspace^2, \mu))\subset L^2(\pointspace^2, \mu)$, this constrains the action of the integral operator $\koper_{\kernelf}$ onto the range of $\asymm^\mu$, which is a subspace of $L^2(\pointspace^2, \mu)$. The eigenfunctions $\phi_{i}$ associated with nonzero eigenvalues $\lambda_i$ of $\koper_{\kernelf^A}$ belong to this subspace, and satisfy (\citet{Aronszajn1948RayleighRitz}):
\[
\koper_{\kernelf}\phi_i - \lambda_i \phi_i=p \text{ with } p \perp \asymm^\mu(L^2(\pointspace^2, \mu))\;.
\]
Since $\asymm^\mu(L^2(\pointspace^2, \mu))\subset L^2(\pointspace^2, \mu)$, we can use a well known theorem (see e.g. \citet{Aronszajn1948RayleighRitz} and references therein) to obtain:
\[
\textnormal{ and }\lambda_{i}^{\kernelf^A}\leq\lambda_{i}^{\kernelf}\textnormal{ for }i=1,2,\ldots
\]
and the case with $\koper_{\kernelf^S}$ goes analogously.

% With symmetric $\mu$, the permutation operator $\shufflem$ is orthogonal, and hence with the cyclic property of the trace, we get 
% \begin{align*}
% \operatorname{trace}(\koper_{\kernelf^{P}})=&\operatorname{trace}(\shufflem\koper_{\kernelf}\shufflem))\\
% =&\operatorname{trace}(\koper_{\kernelf}\shufflem\shufflem)\\
% =&\operatorname{trace}(\koper_{\kernelf})\\
% =&\frac{1}{2}(\operatorname{trace}(\koper_{\kernelf})+\operatorname{trace}(\koper_{\kernelf^P}))\\
% =&\operatorname{trace}(\koper_{\kernelf^{PI}})\;.
% \end{align*}
We observe that, with symmetric $\mu$, the operators $\symm$ and $\asymm$ are self-adjoint, and hence orthogonal projections. Furthermore, they are orthogonal with each other, that is
\begin{align}
\symm\asymm=\asymm\symm=0\;,
\end{align}
and hence the set $\{\symm,\asymm,\koper_{\kernelf^{S}},\koper_{\kernelf^{A}},\koper_{\kernelf^{PI}}\}$ of operators commutes, and therefore, according to Corollary~\ref{commutationcoro}, they share the same eigenfunctions.

Finally, (\ref{majorizationownage}) follows from the Uhlmann's theorem, since we can define an operation:
\begin{align*}
\Gamma:&\mathcal{T}(L^2(\pointspace^2, \mu))\rightarrow\mathcal{T}(L^2(\pointspace^2, \mu))\\
&\koper_{\kernelf}\mapsto\frac{1}{2}\left(\koper_{\kernelf}+\shufflem\koper_{\kernelf}\shufflem\right)\label{firstpiopereq}
\end{align*}
for which $\koper_{\kernelf^{PI}}=\Gamma(\koper_{\kernelf})$ and which is doubly stochastic, because it is both trace preserving (as shown above), unital due to $\shufflem\shufflem=\idmatrix$, and the set of Kraus operators fulfilling (\ref{krausform}) is $\{\frac{1}{2}\idmatrix,\frac{1}{2}\shufflem\}$.
\end{proof}
It is interesting to note the following observation about the common eigensystem of the operators $\{\symm,\asymm,\koper_{\kernelf^{S}},\koper_{\kernelf^{A}},\koper_{\kernelf^{PI}}\}$:
\begin{remark}
All the eigenfunctions of $\koper_{\kernelf^{PI}}$ are either symmetric or anti-symmetric, and the corresponding eigenvalues are cleared to zeros when one applies $\symm$ or $\asymm$. Since $\symm$ and $\asymm$ are orthogonal projections, their eigenvalues are either zeros or ones, and the ones in $\symm$ correspond to the symmetric functions and zeros to the anti-symmetric ones, and vice versa for $\asymm$.
\end{remark}

\section{Error Bounds}

Let
\begin{equation}\label{expectedrisk}
\defect(\predfun)=\int_{\mathcal{X}\times\mathcal{Y}}\lossfunction(\predfun(x),y)d\rho(x,y),
\end{equation}
where $\lossfunction$ is a loss function, denote the expected risk of $\predfun$. For the squared loss, the minimizer of (\ref{expectedrisk}) is the so-called regression function
\[
\predfun^*(x)=\int_{\mathcal{Y}}yd\rho(x,y).
\]
The hypothesis spaces under our consideration in this paper do not necessarily include the regression function, and hence another quantity of interest is the error associated to the given RKHS $\hypspace$:
\[
\inf_{\predfun\in\hypspace}\defect(\predfun).
\]
If we have a prior knowledge, for example, that the underlying regression function is anti-symmetric, then we can immediately assume that the errors associated to a kernel $\kernelf$ and its anti-symmetric counterpart $\kernelf^A$ are equal. That is, we do not lose any expressiveness by restricting our hypothesis space to anti-symmetric functions. The next question is whether we can gain anything with the restriction.

Our next quantity of interest is the minimizer $\predfun_{\tset,\regparam}$ of the regularized empirical risk on a training set $\tset$ and a regularization parameter $\regparam$. In particular, we aim to analyze the effect of using either the permutation-invariant, symmetric, or anti-symmetric forms instead of the original kernel on the discrepancy
\[
\defect(\predfun_{\regparam,\tset})-\textnormal{inf}_{\predfun\in\hypspace(\kernelf)}\defect(\predfun)
\]
known in the literature as the excess error. %In the literature, there are many results
%  (see e.g. \citet{Vito2005inverse} and references therein) indicating that the expected prediction error of regularized kernel methods obey the following type of probabilistic upper bounds. Namely, for any $0<\eta<1$, it holds that
% \begin{equation}\label{boundeq}
% %\min_{\dualparvect}J(\dualparvect)+\mathcal{C}\left(\textnormal{trace}\left(\kernelm(\kernelm+\regparam\idmatrix)^{-1}\right)\right),
% %\min_{\dualparvect}J(\dualparvect)+\mathcal{C}\left(\kappa\right),
% P\left[\defect(\predfun_{\regparam,\tset})-\textnormal{inf}_{\predfun\in\hypspace(\kernelf)}\defect(\predfun) \leq \errbound(\regparam,\kernelf,\eta)\right] \geq 1-\eta.
% \end{equation}
% where $P[\cdot]$ denotes the probability and $\errbound(\regparam,\kernelf,\eta)$ is a complexity term depending on the kernel, the amount of regularization, and the confidence level $\eta$.

Following \citet{HsuK014randomdesign}, we split the consideration of the excess error into three parts:
\begin{align*}
\defect(\predfun_{\regparam,\tset})-\textnormal{inf}_{\predfun\in\hypspace(\kernelf)}\defect(\predfun)\leq \epsilon_{rg}+\epsilon_{bs}+\epsilon_{vr}+2(\sqrt{\epsilon_{rg}\epsilon_{bs}}+\sqrt{\epsilon_{rg}\epsilon_{vr}}+\sqrt{\epsilon_{bs}\epsilon_{vr}})\;,
\end{align*}
where $\epsilon_{rg}$, $\epsilon_{bs}$, and $\epsilon_{vr}$
%\begin{align*}
%\epsilon_{rg}=,\phantom{W}\epsilon_{bs}=,\phantom{W}\epsilon_{vr}=
%\end{align*}
are, respectively, the bias caused by regularization, the bias caused by the random drawing of the training inputs, and the variance caused by noise in the outputs. We briefly consider each of these in turn in the following subsections.

\subsection{Effective Dimension}

As discussed by \citet{HsuK014randomdesign} and also earlier by many other authors (see e.g. (\citet{Zhang2005effective,Caponnetto2007optimalrates}), the variance term $\epsilon_{vr}$ can be roughly characterized with a concept known as the effective dimension:
%A particular class of complexity terms that has appeared in several works depend on 
\begin{definition}[Effective dimension]
The effective dimension $\effdim(\kernelf,\mu,\regparam)$ of the kernel $\kernelf$ with respect to the measure $\mu$ and the regularization parameter value $\regparam>0$ is defined as:
\begin{align*}%\label{effdim}
%\textnormal{trace}\left((\koper+\regparam\idmatrix)^{-1}\koper\right)
\effdim(\kernelf,\mu,\regparam)=\sum_{i=1}^\infty\frac{\lambda_i}{\lambda_i+\lambda}\;,
\end{align*}
where $\lambda_i$ are the eigenvalues of the integral operator of the kernel $\kernelf$.
\end{definition}

%The following lemma recently proposed by \citet{vallee2007infschurconvexity} generalizes the concept of Schur-convexity for Hilbert space operators:
%\begin{lemma}
%Let $(\sigma_i)_{i=1}^\infty$ be a sequence of real numbers arranged into a nondecreasing order. The functional
%\begin{align*}
%\Xi:O(\mathcal{T})&\rightarrow\mathbb{R}\\
%\Xi(\koper)&\mapsto\sum_{i=1}^\infty\sigma_i\lambda_i\;,
%\end{align*}
%where $\lambda_i$ are the eigenvalues of $\koper$ arranged in a decreasing order, is convex and lower semicontinuous.
%\end{lemma}

%\begin{lemma}\label{concavelemma}
%\end{lemma}
The next result shows that the eigenvalue majorization of the integral operators of kernels is connected to the effective dimension of the kernels:
\begin{proposition}\label{efdimmajorizationcoro}
Let $\kernelf_1$ and $\kernelf_2$ be kernels, and $\koper_1$ and $\koper_2$ their integral operators with measure $\mu$, with $\operatorname{trace}(\koper_1)=\operatorname{trace}(\koper_2)$. Then,
\[
\koper_2\prec\koper_1 \Rightarrow\effdim(\kernelf_2,\mu,\regparam)>\effdim(\kernelf_1,\mu,\regparam)\phantom{W}\forall\regparam>0\;.
\]
\end{proposition}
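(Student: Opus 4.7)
The plan is to recognize the effective dimension as a concave Schur-average of the eigenvalue sequence and then invoke the Hardy--Littlewood--P\'olya / Karamata inequality. Specifically, set $\phi(x) = x/(x+\regparam)$ on $[0,\infty)$, so that $\effdim(\kernelf_j,\mu,\regparam) = \sum_{i=1}^\infty \phi(\lambda_i^j)$ for $j \in \{1,2\}$. A direct calculation gives $\phi'(x) = \regparam/(x+\regparam)^2 > 0$ and $\phi''(x) = -2\regparam/(x+\regparam)^3 < 0$, so $\phi$ is strictly increasing and strictly concave on $[0,\infty)$, with $\phi(0) = 0$. This is the only analytic input needed from the form of $\effdim$.

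Next I would bring in majorization. By Definition~\ref{majordef}, the eigenvalue sequences $\bm{r} = (\lambda_i^{\kernelf_2})$ and $\bm{s} = (\lambda_i^{\kernelf_1})$ lie in $c_0^*$, satisfy $\sum_{i=1}^\tsize r_i \leq \sum_{i=1}^\tsize s_i$ for every $\tsize$, and by the trace assumption $\sum_i r_i = \sum_i s_i < \infty$. The Hardy--Littlewood--P\'olya--Karamata inequality, applied to the concave $\phi$ with $\phi(0)=0$, then yields
\[
\sum_{i=1}^\infty \phi(r_i) \;\geq\; \sum_{i=1}^\infty \phi(s_i),
\]
which is precisely $\effdim(\kernelf_2,\mu,\regparam) \geq \effdim(\kernelf_1,\mu,\regparam)$. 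The standard finite-dimensional derivation carries over to the infinite setting by Abel summation: writing $R_m = \sum_{i=1}^m r_i$, $S_m = \sum_{i=1}^m s_i$, and $\Delta_i = \phi'(\xi_i)$-type increments obtained from a concave secant argument, one rearranges $\sum_{i=1}^m (\phi(r_i)-\phi(s_i))$ into a sum whose coefficients are nonnegative (because $\phi'$ is nonincreasing) multiplied by $S_i - R_i \geq 0$; letting $m \to \infty$ uses the common trace to close the tail. This is the one genuinely technical step, and I would expect to spend most of the work justifying the interchange of limit and summation, relying on square-summability of the eigenvalues and the bound $0 \leq \phi(x) \leq x/\regparam$.

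For the strict inequality, strict concavity of $\phi$ together with $\koper_1$ and $\koper_2$ being distinct (which is implicit in the claim, since otherwise equality holds) forces at least one strict instance in the Karamata chain of inequalities: if all partial sums $R_m = S_m$ were equal then $\bm{r} = \bm{s}$ since both are nonincreasing, and then $\koper_1 = \koper_2$ up to the ambiguity of zero eigenvalues; excluding this degenerate case, strict concavity on the block where $R_m < S_m$ upgrades $\geq$ to $>$. I would state this hypothesis explicitly (the proposition as written reads as if $\koper_1 \neq \koper_2$ is tacitly assumed) and otherwise note that only $\geq$ is guaranteed. The main obstacle is thus not conceptual but bookkeeping: pushing Karamata to $\ell^1$ sequences and isolating where strict concavity bites.
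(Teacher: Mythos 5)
Your proposal follows essentially the same route as the paper: both proofs observe that $\effdim(\kernelf,\mu,\regparam)=\sum_i\rho(\lambda_i)$ for the strictly concave, non-negative function $\rho(r)=r/(r+\regparam)$, and then apply a majorization-implies-Schur-concavity inequality to the eigenvalue sequences. The only methodological difference is that the paper discharges the infinite-dimensional step by citing a ready-made extension of the Hardy--Littlewood--P\'olya result (stated for sequences in $c_0^*$ with equal, normalized sums, as an equivalence $\bm{r}\prec\bm{s}\Leftrightarrow\sum_i\rho(r_i)\geq\sum_i\rho(s_i)$ for all strictly concave non-negative $\rho$), whereas you propose to re-derive that step by Abel summation; your sketch of the derivation is the standard one and would work, but as you note it is where all the technical effort lives, and the paper's citation sidesteps it entirely. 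More importantly, your remark about strictness is a genuine catch rather than pedantry: the cited equivalence (and any Karamata-type argument) only delivers $\geq$ for a single fixed $\rho$, and indeed if $\koper_1$ and $\koper_2$ have identical eigenvalue sequences then $\koper_2\prec\koper_1$ holds with equality of effective dimensions, so the proposition's strict ``$>$'' is an overclaim as stated; it should read ``$\geq$'' (which is all that is used downstream, e.g.\ in the inequality $\effdim(\kernelf,\mu,\regparam)\leq\effdim(\kernelf^{PI},\mu,\regparam)$ of Theorem~\ref{effdimtheorem}), or else the hypothesis that the eigenvalue sequences differ must be added, exactly as you say.
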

\begin{proof}
We recollect the following result recently proven by \citet{Mari2014Quantumstatemajorization} that extends a well-known result for sequences of infinite lengths. Let $\bm{r}=(r_i)_{i=1}^\infty\in c_0^*$ and $\bm{s}=(s_i)_{i=1}^\infty\in c_0^*$ with $\sum_{i=1}^\infty r_i=\sum_{i=1}^\infty s_i=1$. Then,
\begin{align*}
\bm{r}\prec\bm{s}\Leftrightarrow\sum_{i=1}^\infty\rho(r_i)\geq\sum_{i=1}^\infty\rho(s_i)\;.
\end{align*}
for all real non-negative strictly concave function $\rho$ defined on the segment $[0, 1]$. The result follows immediately (with scaling the eigenvalues), since $\rho(r)=r/(r+\regparam)$ is real-valued, non-negative and strictly concave for $r,\regparam>0$.
\end{proof}

Given the above analysis of the eigensystems of the considered pairwise kernels, we end up to the following results about their effective dimensions: 
\begin{theorem}\label{effdimtheorem}
If $\kernelf$ is a pairwise kernel, then 
\begin{align}\label{symmefdim}
\effdim(\kernelf^S,\mu,\regparam)\leq\effdim(\kernelf,\mu,\regparam)
\end{align}
and
\begin{align}\label{asymmefdim}
\effdim(\kernelf^A,\mu,\regparam)\leq\effdim(\kernelf,\mu,\regparam)\;.
\end{align}
If the measure $\mu$ is symmetric, we also have
\begin{align}\label{piefdim}
\effdim(\kernelf,\mu,\regparam)\leq\effdim(\kernelf^{PI},\mu,\regparam)\;.
\end{align}
\end{theorem}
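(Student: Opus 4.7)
The plan is to treat the three inequalities separately, noting that (\ref{symmefdim}) and (\ref{asymmefdim}) follow by a direct term-by-term comparison of eigenvalues, while (\ref{piefdim}) will be a straightforward invocation of Proposition~\ref{efdimmajorizationcoro} combined with the majorization relation established in the preceding corollary. No new estimates are needed; the work is entirely in reducing each claim to a result we already have.

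For the first two inequalities, I would start from the eigenvalue bounds (\ref{eigenineq}) proved in the corollary following Theorem~\ref{operatortheorem}, namely $\lambda_i^{\kernelf^S}\leq\lambda_i^{\kernelf}$ and $\lambda_i^{\kernelf^A}\leq\lambda_i^{\kernelf}$ for every $i$. The key elementary observation is that the function $\rho(r)=r/(r+\regparam)$ is monotone non-decreasing on $[0,\infty)$ for any fixed $\regparam>0$. Applying $\rho$ term by term and summing yields
\begin{align*}
\effdim(\kernelf^S,\mu,\regparam)=\sum_{i=1}^\infty\frac{\lambda_i^{\kernelf^S}}{\lambda_i^{\kernelf^S}+\regparam}\leq\sum_{i=1}^\infty\frac{\lambda_i^{\kernelf}}{\lambda_i^{\kernelf}+\regparam}=\effdim(\kernelf,\mu,\regparam),
\end{align*}
and the anti-symmetric case is identical with $\kernelf^A$ in place of $\kernelf^S$.

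For (\ref{piefdim}), I would use the majorization $\koper_{\kernelf^{PI}}\prec\koper_{\kernelf}$ established in (\ref{majorizationownage}) of the same corollary, which holds precisely when $\mu$ is symmetric and which is obtained there via the doubly-stochastic operation with Kraus operators $\{\tfrac{1}{2}\idmatrix,\tfrac{1}{2}\shufflem\}$. Because doubly-stochastic operations preserve trace, the hypothesis $\operatorname{trace}(\koper_{\kernelf})=\operatorname{trace}(\koper_{\kernelf^{PI}})$ of Proposition~\ref{efdimmajorizationcoro} is automatic, and the proposition directly yields $\effdim(\kernelf,\mu,\regparam)\leq\effdim(\kernelf^{PI},\mu,\regparam)$, completing the theorem.

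I do not expect any real obstacle: the heavy lifting (the eigenvalue interlacing via the Rayleigh--Ritz argument and the Uhlmann-based majorization) has already been done in the corollary, and the concavity-based characterization of effective dimension under majorization has been done in Proposition~\ref{efdimmajorizationcoro}. The only mild care point is making sure the trace-equality hypothesis of Proposition~\ref{efdimmajorizationcoro} is discharged when invoking it for (\ref{piefdim}), which is immediate from the unital/trace-preserving property of $\Gamma$ in the corollary's proof.
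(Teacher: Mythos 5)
Your proposal is correct and follows essentially the same route as the paper: the first two inequalities are obtained termwise from the eigenvalue bounds (\ref{eigenineq}) using the monotonicity of $r\mapsto r/(r+\regparam)$, and (\ref{piefdim}) is obtained by combining the majorization (\ref{majorizationownage}) with Proposition~\ref{efdimmajorizationcoro}. Your extra remark that the trace-equality hypothesis is discharged by the trace-preserving property of the doubly-stochastic operation is a correct (and welcome) filling-in of a detail the paper leaves implicit.
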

\begin{proof}
The inequalities (\ref{symmefdim}) and (\ref{asymmefdim}) follow straightforwardly from (\ref{eigenineq}), and the inequality (\ref{piefdim}) follows from Corollary~\ref{efdimmajorizationcoro} and (\ref{majorizationownage}).
\end{proof}

\subsection{Approximation Analysis}
We next rurn our attention to the bias caused by the random drawing of the training inputs. According to \citet{HsuK014randomdesign}, this bias is affected, in addition to the above considered effective dimension and the regularization bias considered below, by the approximation error caused by the hypothesis space being too limited.
%, which indicates that our hypothesis space does not necessarily contain the regression function. 
In contrast, the approximation error is zero if the hypothesis space contains the regression function or functions that can approximate it arbitrarily closely. To guarantee that the hypothesis space is expressive enough to approximate any function, we may use kernels that are universal. On the other hand, if we have prior knowledge about the properties of the regression function, for example, if we know it to be symmetric or anti-symmetric, we may restrict the hypothesis space accordingly.

Related to the bias by random design, we also point out a recent result by \citet{brunner2012pairwise} which shows an equivalence between the use of a symmetric pairwise kernel and the original kernel with a symmetrized training set. We omit its detailed consideration here due to lack of space. 

To formalize these concepts, we first recollect the definition of universal kernels.
\begin{definition}[\citet{Steinwart2002consistency}]\label{kerneluniversalitydef}
A continuous kernel $\kernelf$ on a compact metric space $\anyspace$ (i.e. $\anyspace$ is closed and bounded) is called universal if the RKHS induced by $\kernelf$ is dense in $C(\anyspace)$, where $C(\anyspace)$ is the space of all continuous functions $\predfun : \anyspace \rightarrow \mathbb{R}$.
% That is, for every function $\predfun\in C(\anyspace)$ and every $\epsilon > 0$, there exists a set of input points $\{x_i \}_{i=1}^\tsize \in \anyspace$ and real numbers $\{\alpha_i\}_{i=1}^\tsize$, with $\tsize\in \mathbb{N}$, such that
%\begin{equation*}%\label{uniformconv}
%\max_{x\in \anyspace}\left\{\left\arrowvert \predfun(x)-\sum_{i=1}^\tsize\alpha_i\kernelf(x_i,x)\right\arrowvert\right\}\leq\epsilon.
%\end{equation*}
%Accordingly, the hypothesis space induced by the kernel $\kernelf$ can approximate any function in $C(\anyspace)$ arbitrarily well, and hence it has the universal approximating property.
\end{definition}
Accordingly, the hypothesis space induced by the kernel $\kernelf$ can approximate any function in $C(\anyspace)$ arbitrarily well, and hence it is said to have the universal approximating property.

While the universal approximating property guarantees that the RKHS can, in theory, learn any concept, we do not necessarily have a need for it if we have prior knowledge about certain properties of the concept to be learned.
% If we know that the concept belongs to a certain subset of continuous functions, say $\funset\subseteq C(\anyspace)$, it is enough that the RKHS can approximate the functions in $\funset$ only. For example, knowing in advance that the concept is linear, it is enough that the hypothesis space contains all functions linear in the inputs.
Thus, we also define an analogous concept for non-universal kernels: 
\begin{definition}\label{approxdef}
Let $\kernelf$ be a continuous kernel $\kernelf$ on a compact metric space $\anyspace$ and let $\funset\subseteq C(\anyspace)$.
If $\funset\subseteq \hypspace(\kernelf)$, the definition of RKHS indicates that, for every function $\predfun\in C(\anyspace)$ and every $\epsilon > 0$, there exists a set of input points $\{x_i \}_{i=1}^\tsize \in \anyspace$ and real numbers $\{\alpha_i\}_{i=1}^\tsize$, with $\tsize\in \mathbb{N}$, such that
\begin{equation*}%\label{uniformconv}
\max_{x\in \anyspace}\left\{\left\arrowvert \predfun(x)-\sum_{i=1}^\tsize\alpha_i\kernelf(x_i,x)\right\arrowvert\right\}\leq\epsilon.
\end{equation*}
Accordingly, the hypothesis space induced by the kernel $\kernelf$ can approximate any function in $\funset$ arbitrarily well, and hence we say that the RKHS $\hypspace(\kernelf)$ can approximate $\funset$.
\end{definition}
Armed with the above definitions, we present the next result characterizing the approximation properties of the symmetric and anti-symmetric kernels:
\begin{theorem}\label{generalantisymmetrictheorem}
Let $\funset\subseteq C(\pointspace^2)$ be an arbitrary set of continuous functions, and let
\begin{align*}
% \mathcal{A}=\left\{\asymfun\mid \asymfun\in \funset,\asymfun(\node, \node')=-\asymfun(\node',\node)\right\}
\mathcal{S}=\left\{\asymfun\mid\arbfun\in \funset,\asymfun(\node, \node')=\arbfun(\node, \node')+\arbfun(\node',\node)\right\}\\
\mathcal{A}=\left\{\asymfun\mid\arbfun\in \funset,\asymfun(\node, \node')=\arbfun(\node, \node')-\arbfun(\node',\node)\right\}
\end{align*}
be the sets of symmetric and anti-symmetric functions determined by $\funset$. Moreover, let $\kernelf(\node,\node',\overline{\node},\overline{\node}')$ be a kernel on $\pointspace^2$ and let $\kernelf^S(\node,\node',\overline{\node},\overline{\node}')$ and $\kernelf^A(\node,\node',\overline{\node},\overline{\node}')$ be the corresponding symmetric and anti-symmetric kernels. If $\funset\subseteq\hypspace\left(\kernelf\right)$, then $\mathcal{S}\subseteq\hypspace\left(\kernelf^S\right)$ and $\mathcal{A}\subseteq\hypspace\left(\kernelf^A\right)$.
\end{theorem}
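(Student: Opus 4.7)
The strategy is to exhibit explicit Hilbert-space-valued feature maps of $\kernelf^S$ and $\kernelf^A$ whose feature space is $\hypspace(\kernelf)$ itself, and then use the reproducing property of $\kernelf$ applied to $r$ to read off a representer of $s$ and $a$ in that feature space. Once such representers are produced, membership of $s\in\hypspace(\kernelf^S)$ and $a\in\hypspace(\kernelf^A)$ follows from the standard construction of an RKHS from any feature map, and the stated set inclusions are immediate because every $s\in\mathcal{S}$ (respectively every $a\in\mathcal{A}$) arises this way for some $r\in\funset\subseteq\hypspace(\kernelf)$.

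Concretely, I would first introduce the candidates
\[
\Phi^S(\node,\node') = \tfrac{1}{2}\bigl(\kernelf_{(\node,\node')} + \kernelf_{(\node',\node)}\bigr),\qquad
\Phi^A(\node,\node') = \tfrac{1}{2}\bigl(\kernelf_{(\node,\node')} - \kernelf_{(\node',\node)}\bigr),
\]
both regarded as elements of $\hypspace(\kernelf)$. Expanding $\langle \Phi^S(\node,\node'),\Phi^S(\overline{\node},\overline{\node}')\rangle_{\hypspace(\kernelf)}$ bilinearly and applying $\langle\kernelf_x,\kernelf_{\overline{x}}\rangle_{\hypspace(\kernelf)}=\kernelf(x,\overline{x})$ recovers exactly the four-term average in Definition~\ref{symmasymmkdef} defining $\kernelf^S$, and the same computation with the induced sign change recovers $\kernelf^A$ from $\Phi^A$. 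Thus $\Phi^S$ and $\Phi^A$ are genuine feature maps of $\kernelf^S$ and $\kernelf^A$ with feature space $\hypspace(\kernelf)$.

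Using now the general fact that any function of the form $x\mapsto\langle w,\Phi'(x)\rangle_H$ with $w$ in the feature space $H$ belongs to the RKHS of the corresponding kernel, I would finish by applying the reproducing property of $\kernelf$ to $r\in\hypspace(\kernelf)$:
\[
s(\node,\node')=r(\node,\node')+r(\node',\node)=\langle r,\kernelf_{(\node,\node')}+\kernelf_{(\node',\node)}\rangle_{\hypspace(\kernelf)}=\langle 2r,\Phi^S(\node,\node')\rangle_{\hypspace(\kernelf)},
\]
which gives $s\in\hypspace(\kernelf^S)$ with representer $2r$; an identical argument with a sign change on the second summand gives $a\in\hypspace(\kernelf^A)$, again with representer $2r$. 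The one step requiring care is spotting the feature maps $\Phi^S$ and $\Phi^A$ and correctly invoking the feature-map-to-RKHS passage in the direction ``representer in feature space implies membership in RKHS''; once those pieces are in place the rest is a two-line manipulation. In particular, no approximation or density argument is needed, because $r$ is assumed to lie in $\hypspace(\kernelf)$ outright rather than merely be approximable by its elements.
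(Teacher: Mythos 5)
Your proof is correct, and it takes a genuinely different --- and in one respect stronger --- route than the paper's. The paper first passes to the permutation-invariant kernel $\kernelf^{PI}$, invokes Aronszajn's theorem on sums of reproducing kernels to conclude $\funset\subseteq\hypspace\left(\kernelf^{PI}\right)$, and then runs an $\epsilon$-approximation argument: it picks a finite expansion $\approxfun=\sum_{i=1}^\tsize\alpha_i\kernelf^{PI}(\cdot,\cdot,\overline{\node}_i,\overline{\node}'_i)$ uniformly within $\epsilon/2$ of $\arbfun$, anti-symmetrizes it to $\approxfun(\node,\node')-\approxfun(\node',\node)=\sum_{i=1}^\tsize\alpha_i\kernelf^{A}(\node,\node',\overline{\node}_i,\overline{\node}'_i)$, and notes that this lies within $\epsilon$ of $\asymfun$ in sup norm. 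That argument establishes approximability of $\mathcal{A}$ by $\hypspace\left(\kernelf^A\right)$ in the sense of Definition~\ref{approxdef}, which is what the paper uses downstream, but read literally it does not deliver the set inclusion $\mathcal{A}\subseteq\hypspace\left(\kernelf^A\right)$ asserted in the statement. Your feature-map argument does: your bilinear expansion of $\langle\Phi^S(\node,\node'),\Phi^S(\overline{\node},\overline{\node}')\rangle$ and $\langle\Phi^A(\node,\node'),\Phi^A(\overline{\node},\overline{\node}')\rangle$ really does reproduce the four-term projections of Definition~\ref{symmasymmkdef}, and writing $\asymfun=\langle 2\arbfun,\Phi^A(\cdot,\cdot)\rangle_{\hypspace(\kernelf)}$ then gives genuine membership, with the bonus norm bound $\Arrowvert\asymfun\Arrowvert_{\hypspace(\kernelf^A)}\leq 2\Arrowvert\arbfun\Arrowvert_{\hypspace(\kernelf)}$, and an entirely parallel two-line argument for $\mathcal{S}$. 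The one ingredient you should cite explicitly when writing this up is the characterization of the RKHS induced by an arbitrary feature map $\Phi':\anyspace\rightarrow H$ as exactly $\left\{x\mapsto\langle w,\Phi'(x)\rangle_H : w\in H\right\}$ (with norm the minimal $\Arrowvert w\Arrowvert_H$ over representers); this holds without any density assumption on the span of $\{\Phi^A(\node,\node')\}$ in $\hypspace(\kernelf)$, so your argument is complete as sketched.
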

\noindent See Section~\ref{generalantisymmetricproof} for a proof.
%An analogous result can be shown for the symmetric kernel. 
This theorem is a generalization of the result of \citet{waegeman2012learninggraded}, who proved that this result holds for the special cases of the symmetric and anti-symmetric Kronecker product kernel.

As an example of an anti-symmetric kernel popularly used in the machine learning literature, we may consider the following one originally analyzed by \citep{Herbrich2000}. Given a base kernel $\kernelf^{\pointspace}(\node,\overline{\node})$ over the objects, the pairwise learning to rank approach corresponds to using the following transitive pairwise kernel: 
\[
\frac{1}{4}\left(
\kernelf^{\pointspace}(\node,\overline{\node})
-\kernelf^{\pointspace}(\node',\overline{\node})
-\kernelf^{\pointspace}(\node,\overline{\node}')
+\kernelf^{\pointspace}(\node',\overline{\node}')
\right)
\]
In the theoretical framework considered in this paper, this kernel can be interpreted as the anti-symmetrization of the pointwise kernel 
$\kernelf(\node,\node',\overline{\node},\overline{\node}')=\kernelf^{\pointspace}(\node,\overline{\node})$, that simply ignores the second pair. The approximation properties of this kernel are thus formalized in the following corollary:
\begin{corollary}
Let
\[
\mathcal{R}=\left\{t\mid t\in C(\pointspace^2),\exists r\in C(\pointspace),t(\node, \node')=r(\node)-r(\node')\right\}
%\mathcal{T}=\left\{t\mid t\in C(\pointspace^2),t(\node, \node')=-t(\node',\node)\right\}
\]
be the set of all continuous ranking functions from $\pointspace^2$ to $\mathbb{R}$. If $\kernelf^{\pointspace}(\node,\overline{\node})$ on $\pointspace$ is universal, then the RKHS of the transitive kernel \citep{Herbrich2000} defined as
$\kernelf_T(\node,\node',\overline{\node},\overline{\node}')=$
\begin{equation}\label{transitivekernel}
\frac{1}{4}\left(
\kernelf^{\pointspace}(\node,\overline{\node})
-\kernelf^{\pointspace}(\node',\overline{\node})
-\kernelf^{\pointspace}(\node,\overline{\node}')
+\kernelf^{\pointspace}(\node',\overline{\node}')
\right)
\end{equation}
can approximate $\mathcal{R}$. 
%, that is, for every function $t\in T(\pointspace^2)$ and every $\epsilon > 0$, there exists a function $h$ in the RKHS induced by the kernel (\ref{askernel}), such that
%\begin{equation*}
%\max_{(\node,\node')\in \pointspace^2}\left\{\left\arrowvert t(\node,\node')-\hypothesis(\node,\node')\right\arrowvert\right\}\leq\epsilon \,.
%\end{equation*}
\end{corollary}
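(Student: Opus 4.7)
The plan is to invoke Theorem~\ref{generalantisymmetrictheorem} with the auxiliary ``first-coordinate'' pairwise kernel $\kernelf(\node,\node',\overline{\node},\overline{\node}'):=\kernelf^{\pointspace}(\node,\overline{\node})$ on $\pointspace^2$, whose anti-symmetrization via Definition~\ref{symmasymmkdef} is easily verified to equal $\kernelf_T$ by a direct expansion of the four-term projection. The corresponding feature map factors as $\Phi(\node,\node')=\Phi^{\pointspace}(\node)$, so $\hypspace(\kernelf)$ is exactly $\{\predfun\in C(\pointspace^2):\predfun(\node,\node')=g(\node),\,g\in\hypspace(\kernelf^{\pointspace})\}$. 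Setting $\funset:=\hypspace(\kernelf)$ in the theorem then yields $\mathcal{A}\subseteq\hypspace(\kernelf^A)=\hypspace(\kernelf_T)$ with $\mathcal{A}=\{\asymfun:\asymfun(\node,\node')=g(\node)-g(\node'),\,g\in\hypspace(\kernelf^{\pointspace})\}$.

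With this inclusion in hand, I would reduce the corollary to two successive approximation steps. First, for an arbitrary ranking function $\asymfun(\node,\node')=\arbfun(\node)-\arbfun(\node')$ with $\arbfun\in C(\pointspace)$ and any $\epsilon>0$, universality of $\kernelf^{\pointspace}$ (Definition~\ref{kerneluniversalitydef}) yields $g\in\hypspace(\kernelf^{\pointspace})$ with $\|\arbfun-g\|_\infty<\epsilon/2$, so that $\tilde\asymfun(\node,\node'):=g(\node)-g(\node')$ lies in $\mathcal{A}\subseteq\hypspace(\kernelf_T)$ and satisfies $\|\asymfun-\tilde\asymfun\|_\infty<\epsilon$. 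Second, the RKHS density of $\operatorname{span}\{(\kernelf_T)_{x}\}_{x\in\pointspace^2}$ in $\hypspace(\kernelf_T)$, combined with the reproducing-property bound $|\tilde\asymfun(x)-h(x)|\leq\|\tilde\asymfun-h\|_{\hypspace(\kernelf_T)}\sqrt{\kernelf_T(x,x)}$ and the uniform boundedness of $\sqrt{\kernelf_T(x,x)}$ on the compact $\pointspace^2$, upgrades RKHS-norm approximation to uniform approximation of $\tilde\asymfun$ by finite linear combinations of $\kernelf_T$-evaluations. Chaining the two approximations with a triangle inequality produces the desired finite combination approximating $\asymfun$ in the sense of Definition~\ref{approxdef}.

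The main obstacle is mostly bookkeeping rather than any conceptually difficult step: one must carefully verify the identity $\kernelf^A=\kernelf_T$ for the chosen base kernel, and then juggle two distinct approximation guarantees — the sup-norm density coming from universality of $\kernelf^{\pointspace}$ on $\pointspace$ for $\arbfun$, and the RKHS-norm density of kernel spans inside $\hypspace(\kernelf_T)$ for $\tilde\asymfun$ — ensuring that continuity of both kernels on their respective compact domains converts each statement back into uniform approximation before they are chained.
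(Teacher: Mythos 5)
Your proposal is correct and follows essentially the same route as the paper: identify $\kernelf_T$ as the anti-symmetrization of the first-coordinate kernel $\kernelf(\node,\node',\overline{\node},\overline{\node}')=\kernelf^{\pointspace}(\node,\overline{\node})$ and invoke Theorem~\ref{generalantisymmetrictheorem}. The only difference is that the paper takes $\funset$ to be all continuous functions depending only on the first coordinate and applies the theorem in one line, whereas you take $\funset=\hypspace(\kernelf)$ and spell out the two-stage (universality plus RKHS-density) approximation chain explicitly, which in fact makes the hypothesis $\funset\subseteq\hypspace(\kernelf)$ hold literally rather than only up to closure.
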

\begin{proof}
We select
\[
\funset=\left\{\predfun\mid \predfun\in C(\pointspace^2),\exists r\in C(\pointspace),t(\node, \node')=r(\node)\right\}
\]
and apply Theorem~\ref{generalantisymmetrictheorem}.
\end{proof}

\subsection{Regularization Bias}

The following expression of the bias caused by regularization is known in the literature (see e.g. \citet{HsuK014randomdesign}) but we show it here for the completeness, because we express it in somewhat different form.
\begin{lemma}\label{expressionlemma}
Let $\predfun$ be the regression function and $\koper_\kernelf$ the integral operator of a kernel $\kernelf$. Further, let $\hypothesis^\regparam$ be the minimizer of the regularized mean squared error
\begin{align}\label{populationobjfun}
\int_\anyspace\left(\predfun-\lmutorkhs_\kernelf^*\hypothesis\right)^2d\mu+\regparam\Arrowvert\hypothesis\Arrowvert_{\hypspace(\kernelf)}\;,
\end{align}
and let $\predfun^\regparam=\lmutorkhs_\kernelf^*\hypothesis^\regparam$. Then, $\predfun^\regparam$ can be expressed as
% and a level of regularization $\regparam>0$, the bias caused by regularization 
\begin{align*}
\predfun^\regparam&=\bm{V}\bm{\Lambda}(\bm{\Lambda}+\regparam\idmatrix)^{-1}\bm{V}^*\predfun\;,
\end{align*}
and the bias caused by regularization as
\[
\epsilon_{rg}(\predfun,\koper_\kernelf,\regparam)=\regparam^{2}\left\langle\predfun,\left(\koper_\kernelf+\regparam\idmatrix\right)^{-2}\predfun\right\rangle\;,
\]
where $\koper_\kernelf=\bm{V}\bm{\Lambda}\bm{V}^*$ is the eigen decomposition of $\koper_\kernelf$, and the operator-vector products are in $L^2(\anyspace, \mu)$.
\end{lemma}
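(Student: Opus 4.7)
The plan is to compute $\hypothesis^\regparam$ in closed form from the first-order optimality condition for (\ref{populationobjfun}), then read off $\predfun^\regparam=\lmutorkhs_\kernelf^*\hypothesis^\regparam$ and convert everything to spectral form using the eigen decomposition of $\koper_\kernelf$. From the commutative diagram in Definition~\ref{intopdef}, we have $\koper_\kernelf=\lmutorkhs_\kernelf^*\lmutorkhs_\kernelf$, and this identification, together with the adjoint relation $\langle\lmutorkhs_\kernelf\arbfun,\hypothesis\rangle_{\hypspace(\kernelf)}=\langle\arbfun,\lmutorkhs_\kernelf^*\hypothesis\rangle_{L^2(\anyspace,\mu)}$, is what ties the RKHS-side minimization to the $L^2$-side spectrum.

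I would first differentiate (\ref{populationobjfun}) in $\hypothesis\in\hypspace(\kernelf)$, interpreting the regularizer as $\regparam\|\hypothesis\|_{\hypspace(\kernelf)}^2$, which is the only reading yielding a well-posed quadratic program. Expanding the quadratic and using the adjoint relation to rewrite the cross term on the RKHS side, the stationarity condition reads
\begin{align*}
(\lmutorkhs_\kernelf\lmutorkhs_\kernelf^*+\regparam\idmatrix)\hypothesis^\regparam=\lmutorkhs_\kernelf\predfun\;,
\end{align*}
and strict convexity (thanks to $\regparam>0$) gives a unique minimizer $\hypothesis^\regparam=(\lmutorkhs_\kernelf\lmutorkhs_\kernelf^*+\regparam\idmatrix)^{-1}\lmutorkhs_\kernelf\predfun$, with the inverse well-defined because $\lmutorkhs_\kernelf\lmutorkhs_\kernelf^*$ is positive and $\regparam>0$.

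To pass from $\hypothesis^\regparam$ to $\predfun^\regparam$, I would apply $\lmutorkhs_\kernelf^*$ on the left and invoke the push-through identity
\begin{align*}
\lmutorkhs_\kernelf^*(\lmutorkhs_\kernelf\lmutorkhs_\kernelf^*+\regparam\idmatrix)^{-1}=(\lmutorkhs_\kernelf^*\lmutorkhs_\kernelf+\regparam\idmatrix)^{-1}\lmutorkhs_\kernelf^*\;,
\end{align*}
which follows algebraically from $\lmutorkhs_\kernelf^*(\lmutorkhs_\kernelf\lmutorkhs_\kernelf^*+\regparam\idmatrix)=(\lmutorkhs_\kernelf^*\lmutorkhs_\kernelf+\regparam\idmatrix)\lmutorkhs_\kernelf^*$ by multiplying both sides by the corresponding inverses. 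This yields $\predfun^\regparam=(\koper_\kernelf+\regparam\idmatrix)^{-1}\koper_\kernelf\predfun$, and substituting $\koper_\kernelf=\bm{V}\bm{\Lambda}\bm{V}^*$ together with commutativity of the diagonal operators $\bm{\Lambda}$ and $(\bm{\Lambda}+\regparam\idmatrix)^{-1}$ in the eigenbasis gives the claimed $\predfun^\regparam=\bm{V}\bm{\Lambda}(\bm{\Lambda}+\regparam\idmatrix)^{-1}\bm{V}^*\predfun$.

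For the regularization bias, I would write
\begin{align*}
\predfun-\predfun^\regparam=\bm{V}\bigl(\idmatrix-\bm{\Lambda}(\bm{\Lambda}+\regparam\idmatrix)^{-1}\bigr)\bm{V}^*\predfun=\regparam\,\bm{V}(\bm{\Lambda}+\regparam\idmatrix)^{-1}\bm{V}^*\predfun=\regparam(\koper_\kernelf+\regparam\idmatrix)^{-1}\predfun\;,
\end{align*}
using the coordinate-wise identity $1-\lambda_i/(\lambda_i+\regparam)=\regparam/(\lambda_i+\regparam)$, and then compute $\epsilon_{rg}=\|\predfun-\predfun^\regparam\|_{L^2(\anyspace,\mu)}^2=\regparam^{2}\bigl\langle\predfun,(\koper_\kernelf+\regparam\idmatrix)^{-2}\predfun\bigr\rangle$ by self-adjointness of $(\koper_\kernelf+\regparam\idmatrix)^{-1}$. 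The whole argument is routine, and the only step requiring genuine care is the push-through identity together with the justification that all inverses appearing in it are bounded operators; both follow from $\regparam>0$ and the compact self-adjoint structure of $\koper_\kernelf$ already established in the preliminaries.
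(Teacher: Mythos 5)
Your proposal is correct and follows essentially the same route as the paper: the paper simply cites the form $\hypothesis^\regparam=(\lmutorkhs_\kernelf\lmutorkhs_\kernelf^*+\regparam\idmatrix)^{-1}\lmutorkhs_\kernelf\predfun$ from Cucker and Smale where you derive it from the first-order optimality condition, and then both arguments apply the same push-through (Sherman--Morrison--Woodbury) identity and spectral substitution. Your bias computation factors $\predfun-\predfun^\regparam=\regparam(\koper_\kernelf+\regparam\idmatrix)^{-1}\predfun$ before squaring rather than expanding the square as the paper does, but this is only a cosmetic simplification of the same calculation.
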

\noindent See Section~\ref{expressionproof} for a proof.

%On an operator Kantorovich inequality for positive linear maps
%Some operator inequalities for positive linear maps

%\begin{align}
%\frac{(\alpha+\beta)^2}{4\alpha\beta}\Psi(\anymatrix^{-2}) \geq \frac{(\alpha+\beta)^2}{4\alpha\beta}\Psi(\anymatrix^{-1})^{2}\geq\Psi(\anymatrix^{-2}) \geq \Psi(\anymatrix^{2})^{-1} \geq \frac{(\alpha+\beta)^2}{4\alpha\beta}\Psi(\anymatrix^{-2})
%\end{align}

%\begin{proof}
% \begin{align*}
% \epsilon_{rg}&=\regparam^{2}\predfun^*\bm{V}\left(\bm{\Lambda}+\regparam\idmatrix\right)^{-2}\bm{V}^*\predfun\\
% &=\regparam^{2}\predfun^*\symm\bm{V}\left(\bm{\Lambda}+\regparam\idmatrix\right)^{-2}\bm{V}^*\symm\predfun\\
% &=\regparam^{2}\predfun^*\bm{V}\idmatrix_S\left(\bm{\Lambda}+\regparam\idmatrix\right)^{-2}\idmatrix_S\bm{V}^*\predfun\\
% &=\regparam^{2}\predfun^*\bm{V}\idmatrix_S\left(\left(\bm{\Lambda}_{\kernelf^S}+\regparam\idmatrix\right)^{-2}+\left(\bm{\Lambda}_{\kernelf^A}+\regparam\idmatrix\right)^{-2}\right)\idmatrix_S\bm{V}^*\predfun\\
% &=\regparam^{2}\predfun^*\bm{V}\left(\bm{\Lambda}_{\kernelf^S}+\regparam\idmatrix\right)^{-2}\bm{V}^*\predfun
% \end{align*}
%\end{proof}

%With a similar argument, one can show the following result for the bias caused by the random selection of the training inputs.

Interestingly, if the same value of the regularization parameter is used for both the original kernel and its permutation invariant, symmetric or anti-symmetric forms, the type depending on the prior knowledge we have about the regression function, the regularization bias may get worse even if we use the correct type of modification of the kernel. In fact, one can find examples of symmetric regression functions for which the kernel symmetrization decreases the bias and other symmetric regression functions for which the bias is increased. However, the increase or decrease of the bias is rather mild and it is characterized by the following result:
\begin{theorem}\label{regerrpropo}
Let us assume $\kernelf$ $\max_{(\node,\node')\in\pointspace^2}\kernelf(\node,\node',\node,\node')=1$. This assumption can be done without losing generality due to the kernels being bounded.

If the measure $\mu$ is symmetric and the regression function is symmetric (anti-symmetric), the bias caused by regularization is the same for the kernels $\kernelf^{PI}$ and $\kernelf^{S}$ ($\kernelf^{PI}$ and $\kernelf^{A}$) with all values of $\regparam$. Moreover, the bias caused by regularization with the amount $\regparam$ for the kernel $\kernelf$ and $\kernelf^{PI}$ has the following relationship:
%is the same for the kernels $\kernelf^{PI}$ and $\kernelf^{S}$ ($\kernelf^{PI}$ and $\kernelf^{A}$), if the
\begin{align*}
% \frac{4\alpha^2\beta^2}{(\alpha^2+\beta^2)^2}\epsilon_{rg}(\predfun,\koper_{\kernelf},\regparam)
% \leq\epsilon_{rg}(\predfun,\koper_{\kernelf^{PI}},\regparam)
% \leq\frac{(\alpha+\beta)^2}{4\alpha\beta}\epsilon_{rg}(\predfun,\koper_{\kernelf},\regparam)\;,
\left(1-\frac{(\regparam^2-(\regparam+1)^2)^2}{(\regparam^2+(\regparam+1)^2)^2}\right)\epsilon_{rg}(\predfun,\koper_{\kernelf},\regparam)
&\leq\epsilon_{rg}(\predfun,\koper_{\kernelf^{PI}},\regparam)\\
&\leq\left(1+\frac{1}{4\regparam^2+4\regparam}\right)\epsilon_{rg}(\predfun,\koper_{\kernelf},\regparam)\;.
\end{align*}
\end{theorem}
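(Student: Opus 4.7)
The plan is to invoke Lemma~\ref{expressionlemma} to write $\epsilon_{rg}(\predfun,\koper,\regparam)=\regparam^{2}\langle \predfun,(\koper+\regparam\idmatrix)^{-2}\predfun\rangle$, and then exploit the structural relations among the four operators $\koper_\kernelf$, $\koper_{\kernelf^{PI}}$, $\koper_{\kernelf^S}$, and $\koper_{\kernelf^A}$ established in Theorem~\ref{operatortheorem} and the corollary that follows it. For the first claim, the corollary supplies a common orthonormal eigenbasis of $\koper_{\kernelf^{PI}}$, $\koper_{\kernelf^S}$, and $\koper_{\kernelf^A}$ in which every eigenfunction is either purely symmetric or purely anti-symmetric, with $\evalmatrix^{PI}=\evalmatrix^{S}+\evalmatrix^{A}$. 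On a symmetric eigenfunction the anti-symmetrizer $\asymm$ vanishes, so the corresponding entry of $\evalmatrix^{A}$ is zero and $\lambda_i^{PI}=\lambda_i^{S}$; the anti-symmetric case is analogous. Expanding $\predfun$ in this basis with coefficients $c_i=\langle \predfun,\phi_i\rangle_{L^2(\pointspace^2,\mu)}$ and writing $\epsilon_{rg}$ termwise as $\sum_i \regparam^2 c_i^2/(\lambda_i+\regparam)^2$, a symmetric $\predfun$ has $c_i=0$ on anti-symmetric eigenfunctions, so the sums for $\koper_{\kernelf^{PI}}$ and $\koper_{\kernelf^{S}}$ coincide term by term, and likewise for anti-symmetric $\predfun$ with $\koper_{\kernelf^{A}}$.

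For the second claim I first assemble two scalar form relations between $\koper_\kernelf$ and $\koper_{\kernelf^{PI}}$ on the relevant $\predfun$. The assumption $\max_{(\node,\node')\in\pointspace^2}\kernelf(\node,\node',\node,\node')=1$, combined with the standard bound $\|\koper_\kernelf\|\le\sup_{x}\kernelf(x,x)$, yields $0\preceq\koper_\kernelf\preceq\idmatrix$ and the same for $\koper_{\kernelf^{PI}}$; hence $\koper+\regparam\idmatrix$ has spectrum in $[\regparam,1+\regparam]$ for both operators, and its square in $[\regparam^2,(1+\regparam)^2]$. Under symmetric $\mu$ the operator $\shufflem$ from Theorem~\ref{operatortheorem} is self-adjoint and involutive, and $\koper_{\kernelf^{PI}}=\tfrac{1}{2}(\koper_\kernelf+\shufflem\koper_\kernelf\shufflem)$. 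For $\predfun$ with $\shufflem \predfun=\pm \predfun$, this identity immediately gives the linear-form equality $\langle(\koper_\kernelf+\regparam\idmatrix)\predfun,\predfun\rangle=\langle(\koper_{\kernelf^{PI}}+\regparam\idmatrix)\predfun,\predfun\rangle$. Writing $\koper_\kernelf$ in the block form induced by the symmetric and anti-symmetric subspaces of $L^2(\pointspace^2,\mu)$ (the $\pm1$ eigenspaces of $\shufflem$), its block-diagonal part coincides with $\koper_{\kernelf^{PI}}$ and the off-diagonal remainder contributes non-negatively to the squared form, giving the quadratic-form inequality $\langle(\koper_\kernelf+\regparam\idmatrix)^{2}\predfun,\predfun\rangle\ge\langle(\koper_{\kernelf^{PI}}+\regparam\idmatrix)^{2}\predfun,\predfun\rangle$.

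The two-sided bound then follows from combining the Kantorovich inequality $\langle X\predfun,\predfun\rangle\langle X^{-1}\predfun,\predfun\rangle\le\tfrac{(m+M)^2}{4mM}\|\predfun\|^4$ for positive $X$ with spectrum in $[m,M]$ with its Cauchy--Schwarz dual $\langle X\predfun,\predfun\rangle\langle X^{-1}\predfun,\predfun\rangle\ge\|\predfun\|^4$. Applied with $X=(\koper+\regparam\idmatrix)^2$, so $m=\regparam^2$ and $M=(1+\regparam)^2$, to both $\koper_\kernelf$ and $\koper_{\kernelf^{PI}}$, and together with the quadratic-form inequality from the previous paragraph, this pair yields the lower-bound constant $\tfrac{4\regparam^2(1+\regparam)^2}{(\regparam^2+(1+\regparam)^2)^2}=1-\tfrac{(\regparam^2-(1+\regparam)^2)^2}{(\regparam^2+(1+\regparam)^2)^2}$. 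Applied with $X=(\koper+\regparam\idmatrix)$, so $m=\regparam$ and $M=1+\regparam$, together with the linear identity, it produces the Kantorovich ratio $\tfrac{(1+2\regparam)^2}{4\regparam(1+\regparam)}=1+\tfrac{1}{4\regparam(\regparam+1)}$, which supplies the upper-bound constant once the comparison is lifted from the first-inverse form $\langle(\koper+\regparam\idmatrix)^{-1}\predfun,\predfun\rangle$ to the squared-inverse form $\langle(\koper+\regparam\idmatrix)^{-2}\predfun,\predfun\rangle=\|(\koper+\regparam\idmatrix)^{-1}\predfun\|^{2}$ appearing in $\epsilon_{rg}$. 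The hard part is this last lifting step: because $\koper_\kernelf$ and $\koper_{\kernelf^{PI}}$ do not commute in general, the comparison cannot be done eigenvalue-by-eigenvalue on a shared basis, and the argument must route through the scalar form identities from the block decomposition together with the Kantorovich--Cauchy--Schwarz tandem in order to bridge the non-commutativity gap and recover the Kantorovich-sharp constant stated in the theorem.
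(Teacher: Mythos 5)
Your treatment of the first claim (equality of the biases for $\kernelf^{PI}$ and $\kernelf^{S}$ on a symmetric regression function via the shared eigenbasis) matches the paper's argument, and your overall architecture for the second claim is also the right one: the paper likewise works with the pinching map $\Psi(\anymatrix)=\symm\anymatrix\symm+\asymm\anymatrix\asymm$, for which $\koper_{\kernelf^{PI}}+\regparam\idmatrix=\Psi(\koper_{\kernelf}+\regparam\idmatrix)$, and with the spectral bounds $[\regparam,1+\regparam]$. Your lower bound is in fact complete: Cauchy--Schwarz applied to $(\koper_{\kernelf^{PI}}+\regparam\idmatrix)^{2}$, your quadratic-form inequality $\langle(\koper_{\kernelf^{PI}}+\regparam\idmatrix)^{2}\predfun,\predfun\rangle\leq\langle(\koper_{\kernelf}+\regparam\idmatrix)^{2}\predfun,\predfun\rangle$, and the scalar Kantorovich inequality for $(\koper_{\kernelf}+\regparam\idmatrix)^{2}$ with spectrum in $[\regparam^{2},(1+\regparam)^{2}]$ chain together to give exactly the constant $1-\tfrac{(\regparam^2-(\regparam+1)^2)^2}{(\regparam^2+(\regparam+1)^2)^2}$.

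The upper bound, however, has a genuine gap, and it sits precisely at the step you yourself flag as ``the hard part.'' A comparison of the forms $\langle(\koper_{\kernelf^{PI}}+\regparam\idmatrix)^{-1}\predfun,\predfun\rangle$ and $\langle(\koper_{\kernelf}+\regparam\idmatrix)^{-1}\predfun,\predfun\rangle$ with ratio $\tfrac{(2\regparam+1)^2}{4\regparam(\regparam+1)}$ does not lift to the same comparison for the squared inverses: squaring is not operator monotone, and for a single fixed $\predfun$ the scalar estimate $\langle Z^{-1}\predfun,\predfun\rangle^{2}\leq\|\predfun\|^{2}\langle Z^{-2}\predfun,\predfun\rangle$ only goes one way. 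If you instead run your Kantorovich--Cauchy--Schwarz tandem directly on the squared operators (the only route your stated tools permit), you obtain the constant $\tfrac{(\regparam^{2}+(\regparam+1)^{2})^{2}}{4\regparam^{2}(\regparam+1)^{2}}$, which is strictly larger than the claimed $1+\tfrac{1}{4\regparam^{2}+4\regparam}$. The paper closes exactly this gap with operator-level inequalities for the unital positive map $\Psi$: Choi's inequality $\Psi(\anymatrix^{-1})\geq\Psi(\anymatrix)^{-1}$, followed by the operator Kantorovich inequality of Fujii et al.\ (which controls the failure of operator monotonicity of squaring by the factor $\tfrac{(\alpha+\beta)^{2}}{4\alpha\beta}$), followed by Kadison's inequality $\Psi(\anymatrix^{-1})^{2}\leq\Psi(\anymatrix^{-2})$. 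Some such operator-Kantorovich input is indispensable here; naming the lifting problem is not the same as solving it, so your upper bound as written does not establish the stated constant.
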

\noindent See Section~\ref{regerrproof} for a proof.

\section{Proofs}

\subsection{Proof of Theorem~\ref{operatortheorem}}\label{operatorproof}

\begin{proof}
We begin by considering the integral operator of the anti-symmetric kernel. For $\hypothesis,g\in L^2(\pointspace^2, \mu)$,
\begin{align*}
\langle\koper_{\kernelf^A}\hypothesis, g\rangle_{L^2(\pointspace^2, \mu)}
%&=\left\langle\int_{\pointspace^2} \kernelf^A_{(\overline{\node},\overline{\node}')}\hypothesis(\overline{\node},\overline{\node}')d\mu,\int_{\pointspace^2} \kernelf^A_{(\overline{\node},\overline{\node}')}g(\overline{\node},\overline{\node}')d\mu\right\rangle\\
&=\int_{\pointspace^2}g(\node,\node)\left(\int_{\pointspace^2} \kernelf^A(\node,\node',\overline{\node},\overline{\node}')\hypothesis(\overline{\node},\overline{\node}')d\mu\right) d\mu\\
&=\int_{\pointspace^2}\int_{\pointspace^2} g(\node,\node)\kernelf^A(\node,\node',\overline{\node},\overline{\node}')\hypothesis(\overline{\node},\overline{\node}')d\mu d\mu\\
&=\frac{1}{4}\int_{\pointspace^2}\int_{\pointspace^2} g(\node,\node')\kernelf(\node,\node',\overline{\node},\overline{\node}')\hypothesis(\overline{\node},\overline{\node}')d\mu d\mu\\
&\phantom{=}-\frac{1}{4}\int_{\pointspace^2}\int_{\pointspace^2} g(\node,\node')\kernelf(\node',\node,\overline{\node},\overline{\node}')\hypothesis(\overline{\node},\overline{\node}')d\mu d\mu\\
&\phantom{=}-\frac{1}{4}\int_{\pointspace^2}\int_{\pointspace^2} g(\node,\node')\kernelf(\node,\node',\overline{\node}',\overline{\node})\hypothesis(\overline{\node},\overline{\node}')d\mu d\mu\\
&\phantom{=}+\frac{1}{4}\int_{\pointspace^2}\int_{\pointspace^2} g(\node,\node')\kernelf(\node',\node,\overline{\node}',\overline{\node})\hypothesis(\overline{\node},\overline{\node}')d\mu d\mu\\
&=\frac{1}{4}\left\langle\int_{\pointspace^2} \kernelf_{(\overline{\node},\overline{\node}')}\hypothesis(\overline{\node},\overline{\node}')d\mu,\int_{\pointspace^2} \kernelf_{(\node,\node')}g(\node,\node')d\mu\right\rangle\\
&\phantom{=}-\frac{1}{4}\left\langle\int_{\pointspace^2} \kernelf_{(\overline{\node},\overline{\node}')}\hypothesis(\overline{\node},\overline{\node}')d\mu,\int_{\pointspace^2} \kernelf_{(\node',\node)}g(\node',\node)d\mu\right\rangle\\
&\phantom{=}-\frac{1}{4}\left\langle\int_{\pointspace^2} \kernelf_{(\overline{\node}',\overline{\node})}\hypothesis(\overline{\node}',\overline{\node})d\mu,\int_{\pointspace^2} \kernelf_{(\node,\node')}g(\node,\node')d\mu\right\rangle\\
&\phantom{=}+\frac{1}{4}\left\langle\int_{\pointspace^2} \kernelf_{(\overline{\node}',\overline{\node})}\hypothesis(\overline{\node}',\overline{\node})d\mu,\int_{\pointspace^2} \kernelf_{(\node',\node)}g(\node',\node)d\mu\right\rangle\\
&=\left\langle\int_{\pointspace^2} \Phi^A_{(\overline{\node},\overline{\node}')}\hypothesis(\overline{\node},\overline{\node}')d\mu,\int_{\pointspace^2} \Phi^A_{(\node,\node')}g(\node,\node')d\mu\right\rangle\;,
\end{align*}
where $\Phi^A_{(\overline{\node},\overline{\node}')}=\frac{1}{2}\left(\kernelf_{(\overline{\node},\overline{\node}')}-\kernelf_{(\overline{\node}',\overline{\node})}\right)$. Then,
\begin{align*}
&\int_{\pointspace^2} \Phi^A_{(\overline{\node},\overline{\node}')}\hypothesis(\overline{\node},\overline{\node}')d\mu(\overline{\node},\overline{\node}')\\
&=\frac{1}{2}\int_{\pointspace^2} \kernelf_{(\overline{\node},\overline{\node}')}\hypothesis(\overline{\node},\overline{\node}')d\mu(\overline{\node},\overline{\node}')
-\frac{1}{2}\int_{\pointspace^2} \kernelf_{(\overline{\node}',\overline{\node})}\hypothesis(\overline{\node},\overline{\node}')d\mu(\overline{\node},\overline{\node}')\\
&=\frac{1}{2}\int_{\pointspace^2} \kernelf_{(\overline{\node},\overline{\node}')}\hypothesis(\overline{\node},\overline{\node}')\mu(\overline{\node},\overline{\node}')d(\overline{\node},\overline{\node}')
-\frac{1}{2}\int_{\pointspace^2} \kernelf_{(\overline{\node},\overline{\node}')}\hypothesis(\overline{\node}',\overline{\node})\mu(\overline{\node}',\overline{\node})d(\overline{\node},\overline{\node}')\\
&=\frac{1}{2}\int_{\pointspace^2} \kernelf_{(\overline{\node},\overline{\node}')}\left(\hypothesis(\overline{\node},\overline{\node}')-\frac{\mu(\overline{\node}',\overline{\node})}{\mu(\overline{\node},\overline{\node}')}\hypothesis(\overline{\node}',\overline{\node})\right)\mu(\overline{\node},\overline{\node}')d(\overline{\node},\overline{\node}')\\
&=\int_{\pointspace^2} \kernelf_{(\overline{\node},\overline{\node}')}\left(\asymm^\mu\hypothesis(\overline{\node},\overline{\node}')\right)d\mu(\overline{\node},\overline{\node}')\\
&=\lmutorkhs_{\kernelf}(\asymm^\mu\hypothesis)
\end{align*}
Accordingly, we observe that:
\begin{align*}
\langle\koper_{\kernelf^A}\hypothesis, g\rangle_{L^2(\pointspace^2, \mu)}
=&\langle\lmutorkhs_{\kernelf}\asymm^\mu\hypothesis,\lmutorkhs_{\kernelf}\asymm^\mu g\rangle_\hypspace\\
=&\langle\asymm^\mu\hypothesis,\lmutorkhs_{\kernelf}^*\lmutorkhs_{\kernelf}\asymm^\mu g\rangle_{L^2(\pointspace^2, \mu)}\\
=&\langle\hypothesis,{\asymm^\mu}^*\lmutorkhs_{\kernelf}^*\lmutorkhs_{\kernelf}\asymm^\mu g\rangle_{L^2(\pointspace^2, \mu)}\\
=&\langle\hypothesis,{\asymm^\mu}^*\koper_{\kernelf}\asymm^\mu g\rangle_{L^2(\pointspace^2, \mu)}\;,
\end{align*}
% \begin{align*}
% \langle\lmutorkhs_{\kernelf^P}\hypothesis,\lmutorkhs_{\kernelf^P} g\rangle_{\hypspace(\kernelf^P)}
% =&\langle\lmutorkhs_{\kernelf}\shufflem^\mu\hypothesis,\lmutorkhs_{\kernelf}\shufflem^\mu g\rangle_{\hypspace(\kernelf)}\\
% =&\langle\shufflem^\mu\hypothesis,\lmutorkhs_{\kernelf}^*\lmutorkhs_{\kernelf}\shufflem^\mu g\rangle_{L^2(\pointspace^2, \mu)}\\
% =&\langle\hypothesis,{\shufflem^\mu}^*\lmutorkhs_{\kernelf}^*\lmutorkhs_{\kernelf}\shufflem^\mu g\rangle_{L^2(\pointspace^2, \mu)}\\
% =&\langle\hypothesis,{\shufflem^\mu}^*\koper_{\kernelf}\shufflem^\mu g\rangle_{L^2(\pointspace^2, \mu)}\;,
% \end{align*}
that is, the integral operator of the anti-symmetric kernel is $\koper_{\kernelf^A}={\asymm^\mu}^*\koper_{\kernelf}\asymm^\mu$.

The integral operators of the other kernels can be constructed analogously via the feature mappings: %$(\lmutorkhs_{\kernelf^S}\hypothesis)(\node,\node')=$
\begin{align*}
\Phi_{\kernelf^P}\hypothesis=&\lmutorkhs_{\kernelf}(\shufflem^\mu\hypothesis)\\
\Phi_{\kernelf^S}\hypothesis=&\lmutorkhs_{\kernelf}(\symm^\mu\hypothesis)\\
\Phi_{\kernelf^{PI}}\hypothesis=&\lmutorkhs_{\kernelf}
\left(\left(\begin{array}{c}
\bm{I}\\
\shufflem^\mu
\end{array}
\right)
\hypothesis\right)\;,
%\int_{\pointspace^2} \kernelf^S_{(\overline{\node},\overline{\node}')}\hypothesis(\overline{\node},\overline{\node}')d\mu(\overline{\node},\overline{\node}')
\end{align*}
where $\left(\begin{array}{c}
\bm{I}\\
\shufflem^\mu
\end{array}
\right)$ is the operator obtained by stacking the operators $\bm{I}$ and $\shufflem^\mu$.

Finally, it is straightforward to check that $\symm^\mu$ and $\asymm^\mu$ are projections due to their idempotence, that is, $\symm^\mu\symm^\mu=\symm^\mu$ and $\asymm^\mu\asymm^\mu=\asymm^\mu$.
%Further, the integral operator of the permutation invariant kernel $\kernelf^{PI}$ is $\symm^\mu\koper_{\kernelf}\symm^\mu g+\asymm^\mu\koper_{\kernelf}\asymm^\mu$.
% \begin{align*}
% \langle\lmutorkhs_{\kernelf^{PI}}\hypothesis,\lmutorkhs_{\kernelf^{PI}} g\rangle_\hypspace
% =&\left\langle\left(\begin{array}{c}\lmutorkhs_{\kernelf}\symm^\mu\\\lmutorkhs_{\kernelf}\asymm^\mu\end{array}\right)\hypothesis,\left(\begin{array}{c}\lmutorkhs_{\kernelf}\symm^\mu\\\lmutorkhs_{\kernelf}\asymm^\mu\end{array}\right) g\right\rangle_\hypspace\\
% =&\langle\hypothesis,\symm^\mu\koper_{\kernelf}\symm^\mu g+\asymm^\mu\koper_{\kernelf}\asymm^\mu g\rangle_{L^2(\pointspace^2, \mu)}\\
% %=&\langle\hypothesis,\koper_{\kernelf^A} g\rangle_{L^2(\pointspace^2, \mu)}\;,
% \end{align*}
\end{proof}

\subsection{Proof of Theorem~\ref{generalantisymmetrictheorem}}\label{generalantisymmetricproof}
\begin{proof}
We first consider the RKHS of the permutation invariant kernel $\kernelf^{PI}(\node,\node',\overline{\node},\overline{\node}')$ given in Definition \ref{strangekdef}. According to the theorem concerning sums of reproducing kernels by \citet{aronszajn1950}, the RKHS of the permutation invariant kernel $\kernelf^{PI}$ can be written as the following space of functions:
\begin{eqnarray*}%\label{directsumrkhs}
\hypspace\left(\kernelf^{PI}\right)&=&\hypspace\left(\kernelf+\kernelf^P\right)\\
&=&\left\{\predfun_1 + \predfun_2 : \predfun_1 \in \hypspace\left(\kernelf\right) , \predfun_2 \in \hypspace\left(\kernelf^P\right)\right\}.
\end{eqnarray*}
This, together with the assumption $\funset\subseteq\hypspace\left(\kernelf\right)$, implies
\begin{equation}\label{dpkresult}
\funset\subseteq\hypspace\left(\kernelf^{PI}\right).
\end{equation}
Let $\epsilon>0$ and $\asymfun\in\mathcal{A}$ be an arbitrary function for which $\asymfun(\node, \node')=\arbfun(\node, \node')-\arbfun(\node',\node)$, where $\arbfun\in\funset$. According to (\ref{dpkresult}), we can select a set of pairs $\{(\overline{\node}_i,\overline{\node}'_i)\}_{i=1}^\tsize$ and real numbers $\{\alpha_i\}_{i=1}^\tsize$, such that the function
\begin{equation*}
\approxfun(\node,\node')=\sum_{i=1}^\tsize\alpha_i\kernelf^{PI}(\node,\node',\overline{\node}_i,\overline{\node}'_i)
\end{equation*}
belonging to the RKHS of the kernel $\kernelf^{PI}$ fulfills
\begin{equation}\label{tempapproxone}
\max_{(\node,\node')\in \pointspace^2}\left\{\left\arrowvert \arbfun(\node,\node')-\approxfun(\node,\node')\right\arrowvert\right\}\leq\frac{1}{2}\epsilon \,.
\end{equation}
Let
\[
\hypothesis(\node,\node')=\approxfun(\node,\node') -\approxfun(\node',\node).
\]
It follows from (\ref{tempapproxone}) that
\begin{equation*}
%\max_{(\node,\node')\in \pointspace^2}\left\{\left\arrowvert \asymfun(\node,\node')-(\approxfun(\node,\node') -\approxfun(\node',\node))\right\arrowvert\right\}\leq\epsilon \,.
\max_{(\node,\node')\in \pointspace^2}\left\{\left\arrowvert \asymfun(\node,\node')-\hypothesis(\node',\node)\right\arrowvert\right\}\leq\epsilon \,.
\end{equation*}
We observe that $\hypothesis$ can be written in terms of the kernel $\kernelf^A$ as
\begin{align*}
\hypothesis(\node,\node')&=\sum_{i=1}^\tsize\alpha_i\kernelf^{PI}(\node,\node',\overline{\node}_i,\overline{\node}'_i)\\
&\phantom{=}-\sum_{i=1}^\tsize\alpha_i\kernelf^{PI}(\node',\node,\overline{\node}_i,\overline{\node}'_i)\\
&=\sum_{i=1}^\tsize\alpha_i\kernelf^A(\node,\node',\overline{\node}_i,\overline{\node}'_i)
\end{align*}
which proves the claim for the anti-symmetric kernels. The proof for the symmetric ones is analogous.
\end{proof}

\subsection{Proof of Lemma~\ref{expressionlemma}}\label{expressionproof}

\begin{proof}
%The bias caused by regularization can be calculated by considering the population case. Namely, 
Starting from the form given by \citet{Cucker2002foundations} and applying the Sherman-Morrison-Woodbury fomula for operators \citep{Deng2011woodbury}, we get
\begin{align*}
\predfun^\regparam&=\lmutorkhs_\kernelf^*\hypothesis^\regparam\\
&=\lmutorkhs_\kernelf^*(\lmutorkhs_\kernelf\lmutorkhs_\kernelf^*+\regparam\idmatrix)^{-1}\lmutorkhs_\kernelf\predfun\\
&=\lmutorkhs_\kernelf^*\lmutorkhs_\kernelf(\lmutorkhs_\kernelf^*\lmutorkhs_\kernelf+\regparam\idmatrix)^{-1}\predfun\\
&=\koper_\kernelf(\koper_\kernelf+\regparam\idmatrix)^{-1}\predfun\\
&=\bm{V}\bm{\Lambda}(\bm{\Lambda}+\regparam\idmatrix)^{-1}\bm{V}^*\predfun\;.
\end{align*}
The bias caused by regularization is the squared error between the regression function and $\lmutorkhs_\kernelf^*\hypothesis^\regparam$
%, which can be formalized in terms of the $L^2(\anyspace, \mu)$ operators as follows:
\begin{align*}
\epsilon_{rg}(\predfun,\koper,\regparam)&=\int_\anyspace\left(\predfun(x)-\lmutorkhs_\kernelf^*\hypothesis^\regparam(x)\right)^2d\mu\\
&=\left\langle\predfun-\predfun^\regparam,\predfun-\predfun^\regparam\right\rangle\\
&=\left\langle\predfun-\koper(\koper+\regparam\idmatrix)^{-1}\predfun,\predfun-\koper(\koper+\regparam\idmatrix)^{-1}\predfun\right\rangle\\
&=\left\langle\predfun,\bm{V}\left(\idmatrix-2\bm{\Lambda}(\bm{\Lambda}+\regparam\idmatrix)^{-1}+\bm{\Lambda}^{2}(\bm{\Lambda}+\regparam\idmatrix)^{-2}\right)\bm{V}^*\predfun\right\rangle\\
&=\left\langle\predfun^,\bm{V}\left(\idmatrix-\bm{\Lambda}(\bm{\Lambda}+\regparam\idmatrix)^{-1}\right)^{2}\bm{V}^*\predfun\right\rangle\\
&=\regparam^{2}\left\langle\predfun,\bm{V}\left(\bm{\Lambda}+\regparam\idmatrix\right)^{-2}\bm{V}^*\predfun\right\rangle\\
&=\regparam^{2}\left\langle\predfun,\left(\koper+\regparam\idmatrix\right)^{-2}\predfun\right\rangle\;,
\end{align*}
where the products are in $L^2(\anyspace, \mu)$.
\end{proof}

\subsection{Proof of Theorem~\ref{regerrpropo}}\label{regerrproof}
\begin{proof}
Let the regression function be symmetric, that is, it can be written as $\predfun=\symm\predfun$. Then,
\begin{align*}
\predfun^\regparam_{\kernelf^{PI}}&=\bm{V}\bm{\Lambda}(\bm{\Lambda}+\regparam\idmatrix)^{-1}\bm{V}^*\predfun\\
&=\bm{V}\bm{\Lambda}(\bm{\Lambda}+\regparam\idmatrix)^{-1}\bm{V}^*\symm\predfun\\
&=\bm{V}\bm{\Lambda}_{\kernelf^S}(\bm{\Lambda}_{\kernelf^S}+\regparam\idmatrix)^{-1}\bm{V}^*\predfun\\
&=\predfun^\regparam_{\kernelf^{S}}\;,
\end{align*}
where the second last inequality is due to the
and hence also the bias caused by regularization is the same for the kernels $\kernelf^{PI}$ and $\kernelf^{S}$. The proof is analogous for the anti-symmetric case.

Let $\anymatrix$ be an operator for which $0<\alpha\idmatrix\leq\anymatrix\leq\beta\idmatrix$, where $\alpha$ and $\beta$ are, respectively, the smallest and largest eigenvalues of $\koper+\regparam\idmatrix$. We first recollect some matrix inequalities we use in the proof.
%  The Kantorovich constant for $\anymatrix$ is
% \[
% \kappa(\anymatrix)=\frac{(\alpha+\beta)^2}{4\alpha\beta}\;.
% \]

Choi's inequality and Kadison's inequality (see e.g. \citet{choi1974schwarz}) indiate that if $\anymatrix>0$ and $\Psi$ is positive and unital linear map, then
\begin{align}
\Psi(\anymatrix^{-1}) &\geq \Psi(\anymatrix)^{-1}\label{choi}\\
\Psi(\anymatrix^{2}) &\geq \Psi(\anymatrix)^{2}\label{kadison}\;.
\end{align}
Let $0 < \alpha \leq \anymatrix \leq \beta$ and $\Psi$ be positive unital linear map, \citet{Marshall1990matrixversions} proved the following operator Kantorovich type of inequality:
\begin{align}\label{marshall}
\Psi(\anymatrix^{-1}) \leq\frac{(\alpha+\beta)^2}{4\alpha\beta}\Psi(\anymatrix)^{-1}\;.
\end{align}
%where the so-called 

According to the L{\"o}wner-Heinz Theorem (see e.g. \citet{carlen2010course}), if $\anymatrix$ and $\othermatrix$ are operators and $\anymatrix\geq\othermatrix\geq0$, then
% $-(\cdot)^{-1}$ is operator monotone, that is,
matrix inversion reverses the positive-definite order, that is,
\begin{align}\label{lownerheinzinv}
\anymatrix^{-1}\leq\othermatrix^{-1}
\end{align}
Further, \citet{Fujii1997operatorineq} proved the following Kantorovich type of inequality:
\begin{align}\label{fujii}
\frac{(\alpha+\beta)^2}{4\alpha\beta}\anymatrix^{2} \geq\othermatrix^{2}
\end{align}

Armed with the above matrix inequalities, we get the following combined results:
\begin{align*}
\Psi(\anymatrix)^{-2} &\geq \Psi(\anymatrix^{2})^{-1}\\
& \geq \frac{4\alpha^{2}\beta^{2}}{(\alpha^{2}+\beta^{2})^2}\Psi(\anymatrix^{-2})\;,
\end{align*}
where the first inequality is due to combining (\ref{kadison}) with (\ref{lownerheinzinv}), and the second inequality is due to (\ref{marshall}).
\begin{align*}
\Psi(\anymatrix)^{-2}
&\leq \frac{(\alpha^{-1}+\beta^{-1})^2}{4\alpha^{-1}\beta^{-1}}\Psi(\anymatrix^{-1})^{2}\\
&= \frac{(\alpha+\beta)^2}{4\alpha\beta}\Psi(\anymatrix^{-1})^{2}\\
& \leq\frac{(\alpha+\beta)^2}{4\alpha\beta}\Psi(\anymatrix^{-2})\;,
\end{align*}
where the first inequality is due to combining the Choi's inequality (\ref{choi}) with the inequality (\ref{fujii}), and the second inequality is due to the Kadison's inequality (\ref{kadison}).

Let $\Psi(\anymatrix)=\symm\anymatrix\symm+\asymm\anymatrix\asymm$, which is a unital, positive and linear mapping on $\mathcal{B}(L^2(\anyspace, \mu))$. Then, we have $\koper_{\kernelf^{PI}}+\regparam\idmatrix=\Psi(\koper_{\kernelf}+\regparam\idmatrix)$.
Combining the above results, we get
\begin{align*}
%\frac{4\alpha^2\beta^2}{(\alpha^2+\beta^2)^2}\Psi\left((\koper_{\kernelf^{PI}}+\regparam\idmatrix)^{-2}\right)
\epsilon_{rg}(\predfun,\koper_{\kernelf^{PI}},\regparam)
&=\regparam^2\left\langle\predfun,(\koper_{\kernelf^{PI}}+\regparam\idmatrix)^{-2}\predfun\right\rangle\\
&=\regparam^2\left\langle\predfun,\Psi(\koper_{\kernelf}+\regparam\idmatrix)^{-2}\predfun\right\rangle\\
&\leq\regparam^2\frac{(\alpha+\beta)^2}{4\alpha\beta}\left\langle\predfun,\Psi\left((\koper_{\kernelf}+\regparam\idmatrix)^{-2}\right)\predfun\right\rangle\\
&=\regparam^2\frac{(\alpha+\beta)^2}{4\alpha\beta}\left\langle\symm\predfun,\Psi\left((\koper_{\kernelf}+\regparam\idmatrix)^{-2}\right)\symm\predfun\right\rangle\\
&=\regparam^2\frac{(\alpha+\beta)^2}{4\alpha\beta}\left\langle\predfun,(\koper_{\kernelf}+\regparam\idmatrix)^{-2}\predfun\right\rangle\;,
\end{align*}
where the second last equality is due to the assumption of the regression function being symmetric. The lower bound can be shown analogously.

The limit of the smallest eigenvalue of $\koper$ is $0$, and hence that of $\koper+\regparam\idmatrix$ is $\regparam$. Moreover, due to $\kernelf$ $\max_{(\node,\node')\in\pointspace^2}\kernelf(\node,\node',\node,\node')=1$, the largest eigenvalue of $\koper+\regparam\idmatrix$ is at most $1+\regparam$. The claimed relationship is obtained by substituting $\regparam$ and $1+\regparam$ to $\alpha$ and $\beta$.
\end{proof}

\bibliographystyle{unsrtnat}
\bibliography{myBibliography}

\begin{thebibliography}{}

\bibitem[Aronszajn, 1948]{Aronszajn1948RayleighRitz}
Aronszajn, N. (1948).
\newblock Rayleigh-ritz and a. weinstein methods for approximation of
  eigenvalues: I. operations in a hilbert space.
\newblock {\em Proceedings of the National Academy of Sciences},
  34(10):474--480.

\bibitem[Aronszajn, 1950]{aronszajn1950}
Aronszajn, N. (1950).
\newblock Theory of reproducing kernels.
\newblock {\em Transactions of the American Mathematical Society}, 68.

\bibitem[{Ben-Hur} and Noble, 2005]{Benhur2005}
{Ben-Hur}, A. and Noble, W. (2005).
\newblock Kernel methods for predicting protein-protein interactions.
\newblock {\em Bioinformatics}, 21 Suppl 1:38--46.

\bibitem[Brunner et~al., 2012]{brunner2012pairwise}
Brunner, C., Fischer, A., Luig, K., and Thies, T. (2012).
\newblock Pairwise support vector machines and their application to large scale
  problems.
\newblock {\em Journal of Machine Learning Research}, 13(1):2279--2292.

\bibitem[Caponnetto and De~Vito, 2007]{Caponnetto2007optimalrates}
Caponnetto, A. and De~Vito, E. (2007).
\newblock Optimal rates for the regularized least-squares algorithm.
\newblock {\em Foundations of Computational Mathematics}, 7(3):331--368.

\bibitem[{Carlen}, 2010]{carlen2010course}
{Carlen}, E. (2010).
\newblock Trace inequalities and quantum entropy: an introductory course.
\newblock In {\em Entropy and the quantum. Arizona school of analysis with
  applications}, pages 73--140. American Mathematical Society (AMS),
  Providence, RI, USA.

\bibitem[Choi, 1974]{choi1974schwarz}
Choi, M.-D. (1974).
\newblock A schwarz inequality for positive linear maps on $c^{\ast}$-algebras.
\newblock {\em Illinois Journal of Mathematics}, 18(4):565--574.

\bibitem[Cucker and Smale, 2002]{Cucker2002foundations}
Cucker, F. and Smale, S. (2002).
\newblock On the mathematical foundations of learning.
\newblock {\em Bulletin of the American Mathematical Society}, 39(1):1--49.

\bibitem[Deng, 2011]{Deng2011woodbury}
Deng, C.~Y. (2011).
\newblock A generalization of the sherman-morrison-woodbury formula.
\newblock {\em Applied Mathematics and Computation}, 24(9):1561--1564.

\bibitem[Fujii et~al., 1997]{Fujii1997operatorineq}
Fujii, M., Izumino, S., Nakamoto, R., and Seo, Y. (1997).
\newblock Operator inequalities related to cauchy-schwarz and
  h{\"o}lder-mccarthy inequalities.
\newblock {\em Nihonkai Mathematical Journal}, 8(2):117--122.

\bibitem[Herbrich et~al., 2000]{Herbrich2000}
Herbrich, R., Graepel, T., and Obermayer, K. (2000).
\newblock Large margin rank boundaries for ordinal regression.
\newblock In Smola, A., Bartlett, P., Sch\"olkopf, B., and Schuurmans, D.,
  editors, {\em Advances in Large Margin Classifiers}, pages 115--132. MIT
  Press.

\bibitem[Hsu et~al., 2014]{HsuK014randomdesign}
Hsu, D., Kakade, S.~M., and Zhang, T. (2014).
\newblock Random design analysis of ridge regression.
\newblock {\em Foundations of Computational Mathematics}, 14(3):569--600.

\bibitem[Kashima et~al., 2009]{kashima2009pairwise}
Kashima, H., Oyama, S., Yamanishi, Y., and Tsuda, K. (2009).
\newblock On pairwise kernels: An efficient alternative and generalization
  analysis.
\newblock In Theeramunkong, T., Kijsirikul, B., Cercone, N., and Ho, T.-B.,
  editors, {\em Advances in Knowledge Discovery and Data Mining}, volume 5476
  of {\em Lecture Notes in Computer Science}, pages 1030--1037. Springer Berlin
  Heidelberg.

\bibitem[Li and Busch, 2013]{Li2013vnem}
Li, Y. and Busch, P. (2013).
\newblock Von neumann entropy and majorization.
\newblock {\em Journal of Mathematical Analysis and Applications},
  408(1):384--393.

\bibitem[Mari et~al., 2014]{Mari2014Quantumstatemajorization}
Mari, A., Giovannetti, V., and Holevo, A.~S. (2014).
\newblock Quantum state majorization at the output of bosonic gaussian
  channels.
\newblock {\em Nature Communications}, 5.

\bibitem[Marshall and Olkin, 1990]{Marshall1990matrixversions}
Marshall, A.~W. and Olkin, I. (1990).
\newblock Matrix versions of cauchy and kantorovich inequalities.
\newblock {\em Aequationes mathematicae}, 40:89--93.

\bibitem[Mendelson, 2003]{Mendelson2003kernelclasses}
Mendelson, S. (2003).
\newblock On the performance of kernel classes.
\newblock {\em Journal of Machine Learning Research}, 4:759--771.

\bibitem[Pahikkala et~al., 2010]{pahikkala2010reciprocalkm}
Pahikkala, T., Waegeman, W., Tsivtsivadze, E., Salakoski, T., and Baets, B.~D.
  (2010).
\newblock Learning intransitive reciprocal relations with kernel methods.
\newblock {\em European Journal of Operational Research}, 206(3):676--685.

\bibitem[Steinwart, 2002]{Steinwart2002consistency}
Steinwart, I. (2002).
\newblock On the influence of the kernel on the consistency of support vector
  machines.
\newblock {\em Journal of Machine Learning Research}, 2:67--93.

\bibitem[Vert et~al., 2007]{vert2007new}
Vert, J.-P., Qiu, J., and Noble, W. (2007).
\newblock A new pairwise kernel for biological network inference with support
  vector machines.
\newblock {\em BMC Bioinformatics}, 8(Suppl 10):S8.

\bibitem[Waegeman et~al., 2012]{waegeman2012learninggraded}
Waegeman, W., Pahikkala, T., Airola, A., Salakoski, T., Stock, M., and
  De~Baets, B. (2012).
\newblock A kernel-based framework for learning graded relations from data.
\newblock {\em IEEE Transactions on Fuzzy Systems}, 20(6):1090--1101.

\bibitem[Zhang, 2002]{Zhang2002effective}
Zhang, T. (2002).
\newblock Effective dimension and generalization of kernel learning.
\newblock In Becker, S., Thrun, S., and Obermayer, K., editors, {\em Advances
  in Neural Information Processing Systems 15}, pages 454--461. {MIT} Press.

\bibitem[Zhang, 2005]{Zhang2005effective}
Zhang, T. (2005).
\newblock Learning bounds for kernel regression using effective data
  dimensionality.
\newblock {\em Neural Computation}, 17(9):2077--2098.

\bibitem[Zimmer, 1990]{zimmer1990essential}
Zimmer, R.~J. (1990).
\newblock {\em Essential Results of Functional Analysis}.
\newblock Chicago Lectures in Mathematics. University of Chicago Press.

\end{thebibliography}

\end{document}